\def\eqref#1{equation~\ref{#1}}
\def\1{\bm{1}}
\DeclareMathAlphabet{\mathsfit}{\encodingdefault}{\sfdefault}{m}{sl}
\SetMathAlphabet{\mathsfit}{bold}{\encodingdefault}{\sfdefault}{bx}{n}
\DeclareMathOperator*{\argmin}{arg\,min}
\newcolumntype{Y}{>{\centering\arraybackslash}X}
\newtheorem{proposition}{Proposition}
\newtheorem{lemma}{Lemma}
\theoremstyle{remark} 
\newtheorem{remark}{Remark}
\newcommand{\norm}[1]{\left\lVert #1 \right\rVert}
\title{Diffusion Blend: Inference-Time Multi-Preference Alignment for Diffusion Models}
\author{%
  Min Cheng \\ %\thanks{Email: mincheng@tamu.edu}
  Texas A\&M University \\
  % \texttt{mincheng@tamu.edu} \\
  \And
  Fatemeh Doudi\\ %\thanks{fatemehdoudi@tamu.edu} 
  Texas A\&M University \\
  % \texttt{fatemehdoudi@tamu.edu} \\
  \And
  Dileep Kalathil \\
  Texas A\&M University \\
  % \texttt{dileep.kalathil@tamu.edu} \\
  \And
  Mohammad Ghavamzadeh\\
  Qualcomm AI Research \\
  % \texttt{ghavamza@amazon.com} \\
  \And
  Panganamala R. Kumar  \\
  Texas A\&M University  \\
  % \texttt{prk@tamu.edu} \\
}
\begin{document}

\maketitle
\begin{abstract}
Reinforcement learning (RL) algorithms have been used recently to align diffusion models with downstream objectives such as aesthetic quality and text-image consistency by fine-tuning them to maximize a single reward function under a fixed KL regularization.  However, this approach is inherently restrictive in practice, where alignment must balance multiple, often conflicting objectives. Moreover, user preferences vary across prompts, individuals, and deployment contexts, with varying tolerances for deviation from a pre-trained base model. We address the problem of inference-time multi-preference alignment: given a set of basis reward functions and a reference KL regularization strength, can we design a fine-tuning procedure so that, at inference time, it can generate images aligned with any user-specified linear combination of rewards and regularization, without requiring additional fine-tuning? We propose Diffusion Blend,  a novel approach to solve inference-time multi-preference alignment by blending backward diffusion processes associated with fine-tuned models, and we instantiate this approach with three algorithms: DB-MPA for multi-reward alignment, DB-KLA for KL regularization control, and DB-MPA-LS for approximating DB-MPA without additional inference cost. Extensive experiments show that Diffusion Blend algorithms consistently outperform relevant baselines and closely match the performance of individually fine-tuned models, enabling efficient, user-driven alignment at inference-time.The code is available on  \href{https://github.com/bluewoods127/DB-2025}{Github}.
\end{abstract}

% \vspace{-0.2cm}
\section{Introduction}\label{sec:introduction}
% \vspace{-0.2cm}

Diffusion models, such as Imagen \citep{saharia2022imagen}, DALL·E \citep{ramesh2022dalle2}, and Stable Diffusion \citep{rombach2022stablediffusion}, have demonstrated remarkable capabilities in high-fidelity image synthesis from natural language prompts. However, these models are typically trained on large-scale datasets and are not explicitly optimized for downstream objectives such as semantic alignment, aesthetic quality, or user preference. To address this gap, recent works have proposed reinforcement learning (RL) for aligning diffusion models with task-specific reward functions \citep{uehara2024understanding,fan2023dpok,black2024training}, where the core idea is to fine-tune a pre-trained model to maximize a reward, while constraining the update to remain close to the original model via a Kullback–Leibler (KL) regularization. The KL regularization term prevents
 reward overoptimization (reward hacking), and preserves desirable properties of the pre-trained model \citep{ouyang2022training, rafailov2023dpo} such as sample diversity and visual fidelity \citep{fan2023dpok,uehara2024fine}.

\begin{figure}[t]
    \centering
    \includegraphics[width=\textwidth]{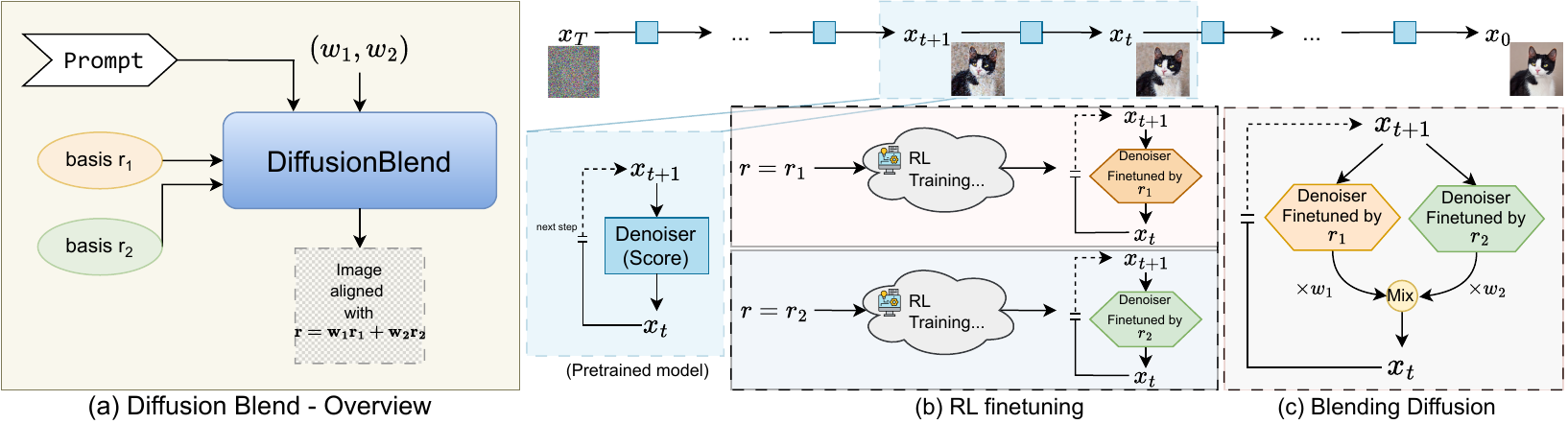}
    % \vspace{-0.5cm}
    \caption{\small (a). Overview of our Diffusion Blend - Multi Preference Alignment (DB-MPA) Algorithm. Given basis reward functions and any user preference weights $w = (w_{1}, w_{2})$, DB-MPA generates images aligned with combined reward $r(w) = w_{1} r_{1} + w_{2} r_{2}$.~ (b) During the fine-tuning stage, DB-MPA gets an  RL fine-tuned model corresponding to each reward function.~ (c)~ During the inference time, DB-MPA blends (mixes) the backward diffusion corresponding to each fine-tuned model according to the user-specified preference $w$.} 
    \vspace{-0.5cm}
    \label{fig:DB-MPA-overview}
\end{figure}

While RL fine-tuning has improved alignment in diffusion models, it typically assumes a fixed reward function and regularization weight. This assumption is restrictive in practice, where alignment must balance multiple, often conflicting objectives, such as aesthetics and prompt fidelity, and user preferences vary across prompts, individuals, and deployment contexts. Static fine-tuning with fixed reward combinations cannot accommodate this variability without retraining separate models for each configuration \citep{wang2024conditional, rame2023rewarded, lee2024parrot}. Moreover, once trained, the trade-offs are fixed, precluding post-hoc adjustment. Similar issues arise with KL regularization: insufficient regularization causes reward hacking, while excessive regularization impedes alignment \citep{uehara2024fine,liu2024decoding}. In practice, both reward and regularization weights are tuned via grid search, incurring significant computational cost and limiting flexibility.

These limitations motivate the need for a more flexible approach: \textbf{inference-time multi-preference alignment}, where the user specifies their preference vector, i.e., weights over a set of basis reward functions such as alignment, aesthetics, or human preference, along with a desired regularization strength that controls deviation from the pre-trained model. Crucially, this alignment must occur without any additional fine-tuning or extensive computation at inference time,  which is essential for real-time and resource-constrained settings. Unlike trial-and-error prompt tuning,  the ideal solution should offer a principled and computationally efficient solution that can achieve Pareto-optimal trade-offs across multiple preferences.   This motivates us to address the following questions:

\textit{Given a set of basis reward functions $(r_i)^{m}_{i=1}$ and a basis KL regularization weight $\alpha$, can we design a fine-tuning procedure such that when the user specifies their reward or regularization preferences through parameters $w$ and $\lambda$ at inference time, the model generates images aligned with the linear reward combination $r(w) =  \sum^{m}_{i=1} w_{i} r_{i}$ and regularization weight $\alpha(\lambda) =  \alpha/\lambda$, without requiring additional fine-tuning?} 
In this work, we answer this question affirmatively and provide constructive solutions to it. Our main contributions are the following: 

\vspace{-3pt}
\begin{itemize}[leftmargin=1.0em, itemsep=0.5pt]
    \item We theoretically show that the backward diffusion process corresponding to the diffusion model aligned with reward function $r(w)$ and regularization weight $\alpha$ can be expressed as the backward diffusion process corresponding to the pre-trained model with an additional control term that depends on $r(w)$. We propose an approximation result for this control term, which enables us to express it using the control terms corresponding to fine-tuned diffusion models for the basis reward functions $(r_{i})^{m}_{i=1}$. We also obtain a similar approximation result corresponding to the regularization weight $\alpha$. 
    
    \item Leveraging the theoretical results we developed,  we propose Diffusion Blend - Multi-Preference Alignment \textbf{(DB-MPA)} algorithm,  a novel approach that will blend the backward diffusion processes corresponding to the basis reward functions appropriately to synthesize a new backward diffusion process during inference-time that will generate images aligned with the reward $r(w)$, where $w$ is specified by the user during the inference-time. Using the same approach, we also propose  Diffusion Blend - KL Alignment \textbf{(DB-KLA)} algorithm that will generate images aligned with a reward function  $r$ and regularization weight $\alpha/\lambda$, where $\lambda$ is specified by the user during the inference time. To reduce the computational overhead associated with DB-MPA, we propose Diffusion Blend - Multi-Preference Alignment- LoRA Sampling \textbf{(DB-MPA-LS)} algorithm that addresses the increased inference time issue while maintaining similar performance.
    \item We provide extensive experimental evaluations using the Stable Diffusion \citep{rombach2022stablediffusion} baseline model,  multiple basis reward functions, regularization weights,  standard prompt sets, and demonstrate that our diffusion blend algorithms outperform multiple relevant baseline algorithms, and often achieve a performance close to the empirical upper bound obtained by an individually fine-tuned model for specific $w$ and $\lambda$.  
\end{itemize}
\vspace{-3pt}

\begin{figure*}[t]
    \centering
    \includegraphics[width=\textwidth]%{figures/02-fig2_mpa_apple.pdf}
    {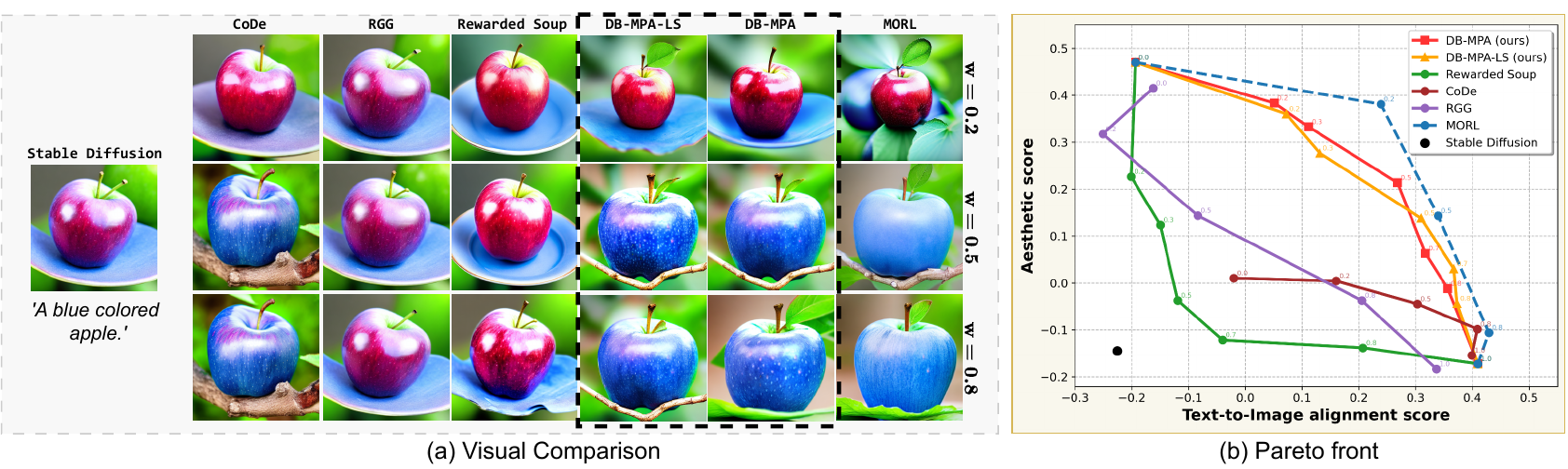} 
    \vspace{-0.5cm}
    \caption{\small Comparison of DB-MPA with baselines: Stable Diffusion v1.5 \citep{rombach2022stablediffusion}, CoDe \citep{singh2025code}, RGG \citep{chung2023dps}, rewarded soup (RS) \citep{rame2023rewarded}, and MORL \citep{roijers2013survey}. Note that MORL is included only to illustrate the maximum achievable performance by an oracle algorithm. See \cref{sec:related-work} for details. For arbitrary preference weight $w$, algorithms generate images aligned with $r(w)=w r_{1} + (1-w) r_{2}$, where $r_{1}$ is text-to-image alignment and $r_{2}$ is aesthetics. \textbf{(a)} Images for $w \in \{0.2, 0.5, 0.8\}$. \textbf{(b)} Pareto-front comparison. DB-MPA significantly outperforms baselines and approaches the MORL upper bound.}

    \label{fig:DB-MPA-pareto}
\end{figure*}

\begin{figure*}[t]
    \centering
    \includegraphics[ width=\textwidth]{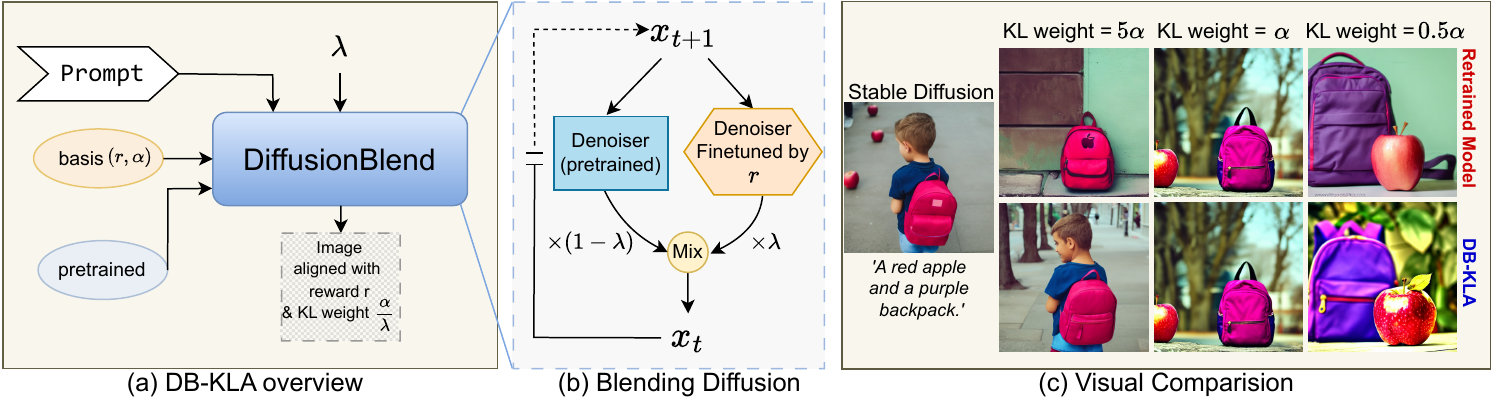} %fig2_main.pdf
     \vspace{-0.5cm}
     \caption{\small \textbf{(a)} Overview of our Diffusion Blend-KL Alignment (DB-KLA) Algorithm. Given an RL fine-tuned model for reward $r$ with KL weight $\alpha$, DB-KLA generates images aligned with KL weight $\alpha/\lambda$ for any user-specified modification factor $\lambda$. \textbf{(b)} During inference, DB-KLA blends the backward diffusion of the fine-tuned and pretrained models according to $\lambda$, which can be larger than $1$. \textbf{(c)} Visual comparisons with $\lambda$-specific RL retrained models using text-to-image-alignment reward and $\lambda \in \{0.2, 1.0, 2.0\}$. DB-KLA achieves smooth control by adjusting the effective distance from the pre-trained model via $\lambda$, generating images similar to $\lambda$-specific RL models without additional fine-tuning.} 
    \vspace{-0.2cm}
    \label{fig:DB-KLA-pareto}
\end{figure*}

% \vspace{-0.4cm}
\section{Related Work}
\label{sec:related-work}
% \vspace{-0.2cm}

\textit{Finetuning-based algorithms:} Prior works align diffusion models via reward-guided finetuning. Rewards-in-context \citep{yang2024rewards} conditions on multiple reward types, DRaFT \citep{clark2024draft} uses weighted combinations during training, and \citep{hao2023optimizing} applies RL with alignment–aesthetic trade-offs. Parrot \citep{lee2024parrot} leverages prompt expansion, while Calibrated DPO \citep{lee2025calibrated} aggregates multiple reward models. Rewarded Soup (RS) \citep{rame2023rewarded} is closest to us, linearly combining parameters from reward-specific models, whereas our DB-MPA blends backward diffusion trajectories in a principled way.

\textit{Guidance algorithms:} Gradient-based methods \citep{chung2023dps, yu2023freedom, song2023loss, bansal2023universal, he2024manifold, ye2024tfg} add reward gradients at each reverse diffusion step, and can handle multiple objectives \citep{han2023training, kim2025test, ye2024tfg}. They require differentiable rewards and Tweedie-based approximations \citep{efron2011tweedie}, leading to noise and high cost. Gradient-free methods instead generate multiple candidates and select high-reward samples \citep{mudgal2024controlled, gui2024bonbon, beirami2024theoretical}, or use particle/value-guided search \citep{li2024derivative, singhal2025general, singh2025code}. These avoid gradients but demand heavy sampling and reward access.

\textit{Multi-Objective RL (MORL):} Approaches \citep{roijers2013survey, yang2019generalized, zhou2022anchor, rame2023rewarded} fine-tune a separate model for each preference or regularization weight. While theoretically optimal, inference-time RL is infeasible; even covering the weight space requires exponentially many models. We thus treat MORL only as an oracle baseline.

\textit{Multi-preference alignment in LLMs:} Works such as \citep{rame2023rewarded,jang2023personalized,shi2024decoding,wang2024conditional,guo2024controllable,zhong2024panacea,wang2024arithmetic} extend RL finetuning to LLMs. For KL-regularized alignment, DeRa \citep{liu2024decoding} controls alignment by combining logits from aligned and reference models. Our diffusion blend methods are inspired by these but introduce inference-time preference alignment specifically for diffusion models. 

\textit{LoRA composition for image generation:} Recent work on multi-concept fusion in diffusion models \citep{zhong2024multi, zou2025cached} focuses on composing pretrained LoRA modules using fixed, heuristic scheduling to mitigate degradation when multiple concepts are combined. In contrast, our DB-MPA framework blends reward-aligned diffusion trajectories, enabling principled and interpretable trade-offs rather than heuristic mixing.

% \vspace{-0.4cm}
\section{Preliminaries and Problem Formulation}
% \vspace{-0.1cm}

\textbf{Diffusion model and pre-training:} A diffusion model \citep{ho2020denoising, song2021score} approximates an unknown data distribution $p_{\text{data}}$ by an iterative approach.  It consists of a forward process and a backward process. In the forward process, a clean sample from the data distribution $p_{\text{data}}$ is progressively corrupted by adding Gaussian noise at each timestep, ultimately transforming the data distribution into pure noise. The reverse process involves training a denoising neural network to iteratively remove the added noise and reconstruct samples from the original data distribution. The forward process is typically represented by the  stochastic differential equation (SDE), $ \mathrm{d} x_t = -\frac{1}{2}\beta(t)x_t \mathrm{d}t + \sqrt{\beta(t)} \mathrm{d} w_t,~ \forall t\in[0,T],$

where  $x_0 \sim p_{\text{data}}$,  $\beta(t)$ is a predefined noise scheduling function, and $w_t$ represents a standard Wiener process. The reverse process of this SDE is given  by \citep{anderson1982reverse, song2021score}
\begin{align}
\label{eq:reverse-process}
    \mathrm{d} x_t = [-\frac{1}{2}\beta(t) x_{t} - \beta(t)\nabla_{x_{t}}\log p_t(x_t)]\mathrm{d}t + \sqrt{\beta(t)} \mathrm{d} w_t, ~ \forall t\in[T,0],
\end{align}

where $p_t$ denotes the marginal probability distribution of $x_t$, $x_{T}$ is sampled  according to a standard Gaussian distribution, and  $\nabla_{x_{t}} \log p_t(x_t)$ represents the \textit{score function} that guides the reverse process.
Since the marginal density $p_{t}$ is unknown, the  score function  is estimated by a neural network $s_{\theta}$ through minimizing score-matching objective \citep{song2021score} given by the optimization problem, $\argmin_{\theta}~~ \mathbb{E}_{t \sim U[0, T]} \mathbb{E}_{x_{0} \sim p_{\text{data}}}\mathbb{E}_{x_t \sim  p_{t}(\cdot|x_{0})} \left[ \lambda(t) \norm{\nabla_{x_{t}}\log p_t(x_t|x_{0}) -  s_{\theta}(x_t, t)}^{2} \right],$
where $s_{\theta}(x_t, t)$ is a neural network parameterized by $\theta$ that approximates the score function and $\lambda(t)$ is a weighting function. In the following, we denote the pre-trained diffusion model by the  (backward) SDE
\begin{align}
\label{eq:backward-diffusion}
     \mathrm{d} x_{t} = f^{\mathrm{pre}}(x_{t},t) \mathrm{d} t + \sigma(t) \mathrm{d} w_t,  ~ \forall t\in[T,0],
\end{align}
where $f^{\mathrm{pre}}(x_{t},t)$ denotes the  term $[-\frac{1}{2}\beta(t) x_{t} - \beta(t)\nabla_{x_{t}}\log p_t(x_t)]$  in \cref{eq:reverse-process}, and $\sigma(t)$ denotes the noise scheduling function. We will use  $p^{\mathrm{pre}}_t$ to denote the distribution of $x_{t}$ according to the pre-trained diffusion model given by the backward SDE in \cref{eq:backward-diffusion}.

\textbf{RL fine-tuning of pre-trained diffusion model:} Diffusion models are pre-trained to learn the score function, and are not trained to additionally maximize downstream rewards such as aesthetic score.  Aligning a pre-trained diffusion model with a given reward function $r(\cdot)$ can be formulated as the optimization problem $\max_{p_{0}}\mathbb{E}_{x_{0} \sim p_{0}}[r(x_{0})]$, with the initialization $p^{\mathrm{pre}}_{0}$.  

However, this can lead to reward over-optimization, disregarding the qualities of data-generating distribution $p^{\mathrm{pre}}_{0}$ learned during the pre-training \citep{fan2023dpok}. To avoid this issue, similar to the LLM fine-tuning \citep{ouyang2022training}, a KL-divergence term between the  pre-trained  and fine-tuned models is included as a regularizer to the RL objective \citep{fan2023dpok}, resulting in the \textit{diffusion model alignment  problem}:  
\begin{align}
    \label{eq:finetune-objective}
      \max_{p_{0}} ~\mathbb{E}_{x_{0} \sim p_{0}}[r(x_{0})] - \alpha \mathrm{KL}(p_{0} \Vert p^{\mathrm{pre}}_{0}),
\end{align}
where $\alpha$ is the KL regularization weight. Since directly evaluating KL divergence between  $p_{0}$ and $p^{\mathrm{pre}}_{0}$ is challenging, it is to upperbounded it  by the sum of the KL divergences between the conditional distributions at each step \citep{fan2023dpok}, resulting in the fine-tuning objective 
\vspace{-0.3cm}
\begin{align}
\label{eq:RL-objective}
    \max_{(p_{t})^{0}_{t=T}}~ \mathbb{E} [r(x_{0}) - \alpha \sum_{t=T}^{1} \mathrm{KL}(p_{t}(\cdot \vert x_{t}) \Vert p^{\mathrm{pre}}_{t}(\cdot \vert x_{t})) ],
\end{align}
where the expectation  is taken w.r.t. $\Pi^{1}_{t=T}p_{t}(x_{t-1}|x_{t})$, and $x_{T} \sim p_{T}$. While the optimization  is over a sequence of distributions, $(p_{t})^{0}_{t=T}$, in practice, we learn only the score function parameter $\theta$ which will induce the distributions $p_{t}$ as $p^{\theta}_{t}$. We assume that this fine-tuning objective will solve \cref{eq:finetune-objective} approximately, which will result in a fine-tuned model aligned with $r$ and $\alpha$. Similar to the notation in \cref{eq:backward-diffusion},  we represent the backward diffusion process corresponding to this fine-tuned model as 
\begin{align}
    \label{eq:backward-diffusion-after-ft}
         \mathrm{d} x_{t} = f^{(r,\alpha)}(x_{t},t) \mathrm{d} t + \sigma(t) \mathrm{d} w_t,  ~ \forall t\in[T,0],
\end{align}
where  $f^{(r,\alpha)}$ depends on $r, \alpha$. The exact form of  $f^{(r,\alpha)}$ is derived in \cref{prop:fpost-fpre-relationship}. 

The fine-tuning problem in \cref{eq:RL-objective} is typically solved using RL by formulating it as an entropy-regularized Markov Decision Process (MDP) \citep{fan2023dpok, uehara2024understanding}. The state $\mathcal{S}$ and action $\mathcal{A}$ spaces are defined as the set of all images $\mathcal{X}$. The transition dynamics at each step $t$ is deterministic,  $P_{t}(s_{t+1} \mid s_t, a_t) = \delta(s_{t+1} = a_t)$. The reward function is non-zero only for $t=T$ and $r(s_{t}) = 0, \forall t \in \{0, \ldots, T-1\}$. The policy $\pi_{t}: \mathcal{S} \to \Delta(\mathcal{A})$  is modeled as a Gaussian distribution, matching with the discretization of the reverse process given in \cref{eq:backward-diffusion}. To align with the diffusion model notation, we define $s_{t} = x_{T-t}$, $a_{t} = x_{T-t-1}$, and $\pi_{t}(a_{t}|s_{t}) = p_{T-t-1}(x_{T-t-1}|x_{T-t})$. The initial distribution of the state $s_{0} = x_{T}$ is standard Gaussian.

\textbf{Problem: Inference-Time Multi-Prefence Alignment:} The main issue with the standard RL fine-tuning  (\cref{eq:finetune-objective}-\cref{eq:RL-objective}) is that the fine-tuned model is optimized for a fixed $(r,\alpha)$, and the model is unchangeable after fine-tuning. So, it is not possible to generate optimally aligned data samples for another reward $r$ or regularization weight $\alpha$ at inference time. To address this problem, we follow the standard multi-objective RL formalism \citep{yang2019generalized, zhou2022anchor}, where we consider a multi-dimensional reward $(r_{1}, r_{2}, \ldots, r_{m})$ and assume that the reward is represented by a linear scalarization $r(w) =  \sum^{m}_{i=1} w_{i} r_{i}, w \in \Delta_{m}$, where $\Delta_{m}$ is the $m$-dimensional simplex. At inference time, the user communicates their preferences by specifying the reward function weight $w$. We also assume that the user specifies a regularization modification factor $\lambda$ to propose an effective regularization weight $\alpha(\lambda) = \alpha/\lambda$. In this paper, we address the following problem:

\textit{How do we solve the alignment problem in \cref{eq:finetune-objective} for arbitrary reward function $r(w)$ and regularization weight $\alpha(\lambda)$ without additional fine-tuning at inference time? Note that $w$ and $\lambda$ are user-specified values at inference time.}

% \vspace{-0.3cm}
\section{Diffusion Blend Algorithm}
\label{sec:db-algorithms}
% \vspace{-0.2cm}
The alignment problem in \cref{eq:finetune-objective} is the same as the one used for LLMs \citep{rafailov2023dpo, liu2024decoding}. Recent results in LLMs, especially those that use the direct preference optimization approach \citep{rafailov2023dpo}, have leveraged the following closed-form solution to \cref{eq:finetune-objective} to develop alignment algorithms:
\begin{align}
\label{eq:p-tar-tilted}
    p^{\mathrm{tar}}(x_{0}) =  p^{\mathrm{pre}}(x_{0}) \cdot \mathrm{exp}\big(r(x_{0})/\alpha\big) \;/\; Z\;,
\end{align}
where $Z$ is a normalization constant. However, in the case of diffusion models,  directly sampling from $p^{\mathrm{tar}}$ is infeasible for two reasons. First, computing $Z$ is intractable, as it requires evaluating an integral over a high-dimensional continuous space. Second, unlike in the case of LLM where $p^{\mathrm{pre}}(x)$ for any token $x$ is available at the output layer of the LLM, diffusion models do not offer an explicit way to evaluate $ p^{\mathrm{pre}}(x)$ for an arbitrary $x$. Instead,  we can only sample from $p^{\mathrm{pre}}$ by running a backward SDE. In other words, we will not be able to directly sample from $ p^{\mathrm{tar}}$ by tilting $p^{\mathrm{pre}}$ as given in \cref{eq:p-tar-tilted}, even if the value of $Z$ is known. RL fine-tuning achieves sampling from $ p^{\mathrm{tar}}$ by learning a model that can synthesize the backward diffusion specified by $f^{(r,\alpha)}$. However, this would suggest an extensive RL fine-tuning for different values of $w$ and $\lambda$ in $r(w)$ and $\alpha(\lambda)$. To address this, we describe an interesting mapping between  $f^{\mathrm{pre}}$ and  $f^{(r,\alpha)}$, which we will then exploit to solve the inference-time multi-preference alignment problem without naive extensive fine-tuning.

Let $x^{\mathrm{pre}}_{0} \sim p^{\mathrm{pre}}(\cdot)$, $x^{\mathrm{tar}}_{0} \sim p^{\mathrm{tar}}(\cdot)$, and $(\epsilon_{t})^{T}_{t=1}$ be an independent, zero mean Gaussian noise sequence with probability distribution $p_{\epsilon_{t}}$, i.e., $\epsilon_{t} \sim p_{\epsilon_{t}}(\cdot)$. Consider the forward noise processes $x^{\mathrm{pre}}_{t} = x^{\mathrm{pre}}_{0} + \epsilon_{t}$ and $x^{\mathrm{tar}}_{t} = x^{\mathrm{tar}}_{0} + \epsilon_{t}$, and let $p^{\mathrm{pre}}_{t}$ and $p^{\mathrm{tar}}_{t}$ be the marginal distributions of $x^{\mathrm{pre}}_{t}$ and $x^{\mathrm{tar}}_{t}$, respectively. As standard in the diffusion model literature \citep{song2021score,ho2020denoising}, we assume that under the forward noise process, the distributions of $x^{\mathrm{pre}}_{T}$ and  $x^{\mathrm{tar}}_{T}$ are Gaussian. Let $p^{\mathrm{pre}}_{0|t}$ denote the conditional distribution of $x^{\mathrm{pre}}_{0}$ given $x^{\mathrm{pre}}_{t}$. Then, we have the following result.

\begin{proposition}
\label{prop:fpost-fpre-relationship}
    Let $f^{(r,\alpha)}$ and $f^{\mathrm{pre}}$ be as specified in \cref{eq:backward-diffusion-after-ft} and \cref{eq:backward-diffusion}, respectively. Then, $f^{(r,\alpha)}(x_{t}, t) = f^{\mathrm{pre}}(x_{t}, t) -\beta(t) u^{(r,\alpha)}(x_{t}, t)$, where
\begin{align}
     u^{(r,\alpha)}(x_{t}, t) =  \nabla_{x_{t}} \log \; \mathbb{E}_{x_{0} \sim p^{\mathrm{pre}}_{0|t}(\cdot|x_{t})} \left[  \mathrm{exp}\left(\frac{r(x_{0})}{\alpha} \right) \right].
\end{align}
\end{proposition}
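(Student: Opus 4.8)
The plan is to show that the target distribution $p^{\mathrm{tar}}$ of the fine-tuned model, when subjected to the same forward noising process as the pre-trained model, has marginals $p^{\mathrm{tar}}_t$ whose scores differ from the pre-trained scores $\nabla_{x_t}\log p^{\mathrm{pre}}_t$ by exactly the control term $u^{(r,\alpha)}$. Since both the pre-trained backward SDE \eqref{eq:backward-diffusion} and the fine-tuned backward SDE \eqref{eq:backward-diffusion-after-ft} use the same drift/diffusion structure $-\tfrac12\beta(t)x_t - \beta(t)\nabla_{x_t}\log p_t(x_t)$ with the same $\sigma(t)$, the difference $f^{(r,\alpha)} - f^{\mathrm{pre}}$ is just $-\beta(t)\big(\nabla_{x_t}\log p^{\mathrm{tar}}_t(x_t) - \nabla_{x_t}\log p^{\mathrm{pre}}_t(x_t)\big)$; absorbing the $-\beta(t)$ factor into the definition of the control term (consistent with the convention used implicitly in \eqref{eq:backward-diffusion}), it suffices to compute $\nabla_{x_t}\log p^{\mathrm{tar}}_t - \nabla_{x_t}\log p^{\mathrm{pre}}_t$.

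First I would write the marginal of the noised target at level $t$ via the forward kernel: $p^{\mathrm{tar}}_t(x_t) = \int p_{\epsilon_t}(x_t - x_0)\, p^{\mathrm{tar}}(x_0)\, \mathrm{d}x_0$. Substituting the closed-form tilting \eqref{eq:p-tar-tilted}, $p^{\mathrm{tar}}(x_0) = p^{\mathrm{pre}}(x_0)\exp(r(x_0)/\alpha)/Z$, gives
\begin{align}
p^{\mathrm{tar}}_t(x_t) = \frac{1}{Z}\int p_{\epsilon_t}(x_t - x_0)\, p^{\mathrm{pre}}(x_0)\, \exp\!\big(r(x_0)/\alpha\big)\, \mathrm{d}x_0 = \frac{p^{\mathrm{pre}}_t(x_t)}{Z}\, \mathbb{E}_{x_0 \sim p^{\mathrm{pre}}_{0|t}(\cdot|x_t)}\!\left[\exp\!\big(r(x_0)/\alpha\big)\right],
\end{align}
where the last equality uses Bayes' rule, $p_{\epsilon_t}(x_t - x_0)\,p^{\mathrm{pre}}(x_0) = p^{\mathrm{pre}}_t(x_t)\,p^{\mathrm{pre}}_{0|t}(x_0|x_t)$. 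Taking $\log$ and then $\nabla_{x_t}$ of both sides, the $\log Z$ term vanishes and $\nabla_{x_t}\log p^{\mathrm{pre}}_t(x_t)$ cancels against the corresponding piece, leaving exactly $\nabla_{x_t}\log p^{\mathrm{tar}}_t(x_t) - \nabla_{x_t}\log p^{\mathrm{pre}}_t(x_t) = \nabla_{x_t}\log\mathbb{E}_{x_0\sim p^{\mathrm{pre}}_{0|t}(\cdot|x_t)}[\exp(r(x_0)/\alpha)]$, which is the claimed $u^{(r,\alpha)}(x_t,t)$.

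Two points need care. The first is justifying that the fine-tuning objective \eqref{eq:RL-objective}, which optimizes over step-wise conditionals and upper-bounds the true KL, actually induces the backward process whose terminal marginal is $p^{\mathrm{tar}}$; the paper already flags this as an assumption ("We assume that this fine-tuning objective will solve \eqref{eq:finetune-objective} approximately"), so I would simply invoke that assumption and treat $p^{\mathrm{tar}}$ from \eqref{eq:p-tar-tilted} as the data distribution of the fine-tuned diffusion model, so that $f^{(r,\alpha)}$ is the reverse-SDE drift associated with the $p^{\mathrm{tar}}$-noising marginals. The second, and the main obstacle, is handling the interchange of $\nabla_{x_t}$ with the integral (differentiation under the integral sign) and ensuring the conditional expectation is finite and positive so that the $\log$ and its gradient are well-defined — this requires mild regularity, e.g. $r$ bounded above (or $\exp(r/\alpha)$ integrable against $p^{\mathrm{pre}}_{0|t}$) and dominated-convergence conditions on $p_{\epsilon_t}$; I would state these as standing regularity assumptions, consistent with the Gaussian-terminal and smoothness assumptions already adopted from the diffusion literature, rather than belabor the analytic details.
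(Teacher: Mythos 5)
Your proposal is correct and follows essentially the same route as the paper's proof: convolve the tilted target density with the forward noise kernel, apply Bayes' rule to rewrite it as $p^{\mathrm{pre}}_t(x_t)$ times the posterior expectation of $\exp\left(r(x_0)/\alpha\right)$ up to a normalizing constant, and take $\nabla_{x_t}\log$ so that constant drops out and the score shift is exactly $u^{(r,\alpha)}$. Your extra remarks on absorbing the $\beta(t)$ factor and on the regularity needed to differentiate under the integral address points the paper itself elides, but they do not change the argument.
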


\begin{remark}
    We prove \cref{prop:fpost-fpre-relationship} following the SDE interpretation of diffusion models \citep{song2021score}, and by showing that two SDEs initialized at time $t=0$ at $p^{\mathrm{pre}}$ and $p^{\mathrm{tar}}$, and sharing the same forward noise injection process, can be reversed similarly. In particular, we show that the key parameters of the corresponding two reverse SDEs remain unchanged, except that the latter includes an additional control term $u^{(r,\alpha)} $ in the score function.  We note that \citep{uehara2024fine} derives a similar result by analyzing the RL objective in \cref{eq:RL-objective} and leveraging results from stochastic optimal control. In contrast, our approach analyzes the original alignment objective in \cref{eq:finetune-objective} and derives a simpler, first-principles proof without relying on stochastic optimal control theory, under the standard mild assumption that for large $T$, the terminal distribution of both forward SDEs is Gaussian.
\end{remark}

We now consider an approximation $\bar{u}^{(r,\alpha)}$ to $u^{(r,\alpha)}$,  motivated by the Jensen gap approximation idea that has been successfully utilized in algorithms for noisy image inverse problems \citep{chung2023dps, rout2023psld, rout2024beyond}. Let $x$ be a random variable with distribution $p$. For a nonlinear function $f$, the Jensen gap is defined as $\mathbb{E}[f(x)] - f(\mathbb{E}[x])$. In our case, we interchange the expectation $\mathbb{E}[\cdot]$ and the nonlinear function $\text{exp}(\cdot)$ in  $u^{(r,\alpha)}$ to obtain the following approximation:
\begin{align}
\label{eq:u-approx}
    u^{(r,\alpha)}(x,t) = \bar{u}^{(r,\alpha)}(x, t)  + \Delta^{(r,\alpha)}(x, t), \quad \text{where} \;\;\; \bar{u}^{(r,\alpha)}(x, t) = \nabla_{x} \mathbb{E}_{x_{0} \sim p^{\mathrm{pre}}_{0|t}(\cdot|x)} \left[ \frac{r(x_{0})}{\alpha}  \right]. 
\end{align}

We now prove an upper-bound on the approximation error $\Delta^{(r,\alpha)}(x, t)$ in~\cref{eq:u-approx} under certain reasonable assumptions. Refer to \cref{app:approx-error} for more details.

\begin{lemma}
\label{lemma:approx-error}
    Assume $\frac{r(x_{0}^{\mathrm{pre}})}{\alpha} = \Tilde{r}(x_{t}^{\mathrm{pre}}, t) + \eta(\omega, x_{t}^{\mathrm{pre}}, t)$, where $\eta(\omega,x_t^{\mathrm{pre}},t)$ specifies the randomness induced by the backward diffusion process. Denote $p_{R|t}$ as the conditional distribution of the random variable $R:=r(x_0^{\mathrm{pre}})/\alpha$ given $x_t^{\mathrm{pre}}$. Then,
    \begin{align*}
       &\lvert \Delta^{(r,\alpha)}(x, t) \rvert \leq  L_{t,1}(x)\times L_{t,2}(x)+L_{t,3}(x), \qquad \text{where} \\
       &L_{t,1}(x)=\sqrt{\mathbb{E}[\lVert \nabla_{x}\eta(\omega, x^{\mathrm{pre}}_t, t) \rVert_2^2 \mid x_t^{\mathrm{pre}}=x]}, \quad L_{t,2}(x)=\frac{\sqrt{\mathrm{Var}(\exp(r(x_0^{\mathrm{pre}})/\alpha) \mid x_t^{\mathrm{pre}}=x)}}{\mathbb{E}[\exp(r(x_0^{\mathrm{pre}})/\alpha) \mid x_t^{\mathrm{pre}}=x]}, \\
           &L_{t,3}(x)=(1+\frac{1}{\alpha}) \cdot \sup_r\lVert\nabla_x\log p_{R|t}(r \mid x_t^{\mathrm{pre}}=x)+\nabla_r\log p_{R|t}(r \mid x_t^{\mathrm{pre}}=x)\rVert.
    \end{align*} 
\end{lemma}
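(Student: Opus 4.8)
The plan is to read $\Delta^{(r,\alpha)}(x,t)=u^{(r,\alpha)}(x,t)-\bar u^{(r,\alpha)}(x,t)$ as a ``Jensen gap at the level of gradients.'' With $R=r(x_0^{\mathrm{pre}})/\alpha$, $M(x)=\mathbb{E}[e^{R}\mid x_t^{\mathrm{pre}}=x]$ and $m(x)=\mathbb{E}[R\mid x_t^{\mathrm{pre}}=x]$, we have $u^{(r,\alpha)}=\nabla_x\log M$ and $\bar u^{(r,\alpha)}=\nabla_x m$, so the statement quantifies how differently the $x$-dependence of the conditional law $p_{R|t}(\cdot\mid x)$ propagates through $\log\mathbb{E}[e^{R}]$ versus through $\mathbb{E}[R]$. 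I would first record two ways of differentiating these functionals in $x$: a reparametrization form, carrying the backward-process randomness in a fixed latent $\omega$ so that $R=R(\omega,x,t)=\tilde r(x,t)+\eta(\omega,x,t)$ and $\nabla_x M=\mathbb{E}_\omega[e^{R}\nabla_x R]$, $\nabla_x m=\mathbb{E}_\omega[\nabla_x R]$; and a score form, $\nabla_x M=\mathbb{E}[e^{R}\,\nabla_x\log p_{R|t}(R\mid x)]$, $\nabla_x m=\mathbb{E}[R\,\nabla_x\log p_{R|t}(R\mid x)]$. The proof decomposes $\Delta^{(r,\alpha)}=A+B$, where $A$ is the part that remains if the differentiable decomposition $R=\tilde r+\eta$ is used as an exact reparametrization, and $B$ corrects for its discrepancy with the true (score-form) $x$-derivative of $p_{R|t}$.

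To bound $A$, substitute $\nabla_x R=\nabla_x\tilde r(x,t)+\nabla_x\eta$ into the reparametrization expressions. Since $\tilde r(x,t)$ is deterministic given $x_t^{\mathrm{pre}}=x$, its contributions to $\nabla_x\log M=\nabla_x M/M$ and to $\nabla_x m$ coincide and cancel, leaving $\frac{\mathbb{E}_\omega[e^{R}\nabla_x\eta]}{\mathbb{E}_\omega[e^{R}]}-\mathbb{E}_\omega[\nabla_x\eta]$; factoring $e^{R}=e^{\tilde r(x,t)}e^{\eta}$ and cancelling the deterministic prefactor recasts this as the change-of-measure residual $\mathbb{E}_\omega\big[\big(\tfrac{e^{\eta}}{\mathbb{E}_\omega[e^{\eta}]}-1\big)\nabla_x\eta\big]$, the difference between the averages of $\nabla_x\eta$ under the reward-tilted law and under the base law. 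Cauchy--Schwarz bounds its norm by $\sqrt{\mathbb{E}\big[(e^{\eta}/\mathbb{E}[e^{\eta}]-1)^2\big]}\cdot\sqrt{\mathbb{E}[\lVert\nabla_x\eta\rVert_2^2]}$; the second factor is exactly $L_{t,1}(x)$, and because the tilt ratio is invariant under the deterministic shift $R\mapsto R-\tilde r(x,t)$, the first factor equals $\sqrt{\mathrm{Var}(e^{R}\mid x_t^{\mathrm{pre}}=x)}/\mathbb{E}[e^{R}\mid x_t^{\mathrm{pre}}=x]=L_{t,2}(x)$, which gives the $L_{t,1}(x)L_{t,2}(x)$ contribution.

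To bound $B$, I would reconcile the reparametrization gradient with the score form; the discrepancy is governed by the combination $\nabla_x\log p_{R|t}(r\mid x)+\nabla_r\log p_{R|t}(r\mid x)$, which is zero exactly when the $x$-dependence of the conditional law of $R$ acts as a pure location shift undone by a matching shift in $r$. To make this quantitative, integrate by parts in the scalar reward variable $\rho$ --- the boundary terms vanish under the standard decay of $p_{R|t}(\cdot\mid x)$, using $\int e^{\rho}\partial_\rho p_{R|t}\,d\rho=-M(x)$ and $\int\rho\,\partial_\rho p_{R|t}\,d\rho=-1$ --- so that $B$ becomes the sum of a reward-tilted expectation of $\nabla_x\log p_{R|t}+\nabla_r\log p_{R|t}$, which contributes the factor $1$ because the tilted law is a probability measure, and a linearly reward-weighted expectation of the same quantity, which contributes the factor $1/\alpha$ because $|R|=|r(x_0^{\mathrm{pre}})|/\alpha\le 1/\alpha$ under the usual normalization of $r$. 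Bounding both by $\sup_r\lVert\nabla_x\log p_{R|t}(r\mid x)+\nabla_r\log p_{R|t}(r\mid x)\rVert$ yields $L_{t,3}(x)$, and the triangle inequality over $A$ and $B$ assembles the claim.

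The main obstacle is the bookkeeping that cleanly separates the two contributions: deciding which part of the $x$-dependence of $p_{R|t}(\cdot\mid x)$ is the ``location'' part absorbed into the tidy $L_{t,1}L_{t,2}$ bound and packaging the remaining ``shape'' part as the single, interpretable quantity $\nabla_x\log p_{R|t}+\nabla_r\log p_{R|t}$ rather than as an unwieldy residual. Beyond this, one must discharge the routine regularity conditions --- differentiating under the integral sign in $M(x)$ and $m(x)$, finiteness of the conditional moment generating function of $R$, and vanishing of the boundary terms in the $\rho$-integration by parts --- which I would state as standing assumptions rather than belabor.
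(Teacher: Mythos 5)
Your proposal is correct and follows essentially the same route as the paper: the decomposition into $A$ and $B$ coincides with the paper's split into $I_1$ (pathwise/reparametrization term, bounded via the mean-zero tilt factor and Cauchy--Schwarz to give $L_{t,1}L_{t,2}$) and $I_2$ (score-form discrepancy, rewritten via $\nabla_x\log p_{R|t}+\nabla_r\log p_{R|t}$ with the same integration-by-parts cancellation of the $-1$ boundary constants and the bound $(1+1/\alpha)\sup\lVert\cdot\rVert = L_{t,3}$). The regularity caveats you flag (tail decay so boundary terms vanish, differentiation under the integral) are exactly the ones the paper assumes.
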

\begin{remark}
    In \cref{lemma:approx-error}, the term $L_{t,1}$ quantifies the local Lipschitz sensitivity of the stochastic component of $R$ with respect to changes in $x_t$; $L_{t,2}$ denotes the conditional coefficient of variation (i.e., the ratio of standard deviation to mean) of $R$ given $x_t$; and $L_{t,3}$ measures the deviation of the conditional distribution $p(x_0 \mid x_t)$ from a pure shift family.
    A shift family (or location family) refers to a class of conditional distributions where changing the conditioning variable results in a simple translation of the distribution without altering its shape, e.g., $P(X=x|Y=y)=P(X=x-y)$ \citep{casella2024statistical}. For a perfect shift family, we may write $L_{t,3}\equiv0$. In diffusion models, as $t$ gets closer to 0, $p^{\mathrm{pre}}_{0|t}(x_0|x_t)$ becomes more deterministic and concentrates around $x_t$, with variation of $x_t$ reducing to a mean shift, resulting $L_{t,3}$ to get closer to $0$. When the reward function is more predictable from the noisy image $x_t$ or $t$ becomes closer to $0$, both $L_{t,1}$ and $L_{t,2}$ will be small. Note that $L_{t,2}$ and $L_{t,3}$ will increase when the regularization coefficient $\alpha$ becomes very small, suggesting that our algorithm might fail when we decrease $\alpha$ dramatically. We note that \citep{uehara2024fine} exchanges the order of $\mathbb{E}[\cdot]$ and $\exp(\cdot)$ under the assumption $r(x_0) = k(x_t) + \epsilon$, where $\epsilon$ is an independent noise term. This is consistent with our result, as it would be easy to derive $L_{t,1}\equiv L_{t,3} \equiv 0$ under their assumption.
\end{remark}

The key motivation behind the approximation in \cref{eq:u-approx} is that we can now leverage the linearity of expectation available in $\bar{u}^{(r,\alpha)}$ to approximate $f^{(r(w),\alpha(\lambda))}$ in terms of $f^{(r_{i},\alpha)}, \;i = 1, \ldots, m$. 
\begin{lemma}
\label{lem:f-as-sumf}
 Let $f^{(r,\alpha)}$ be as specified in \cref{eq:backward-diffusion-after-ft}. Then, we have
 \begin{align}
    \label{eq:f-mpa-approx}
     f^{(r(w),\alpha)}(x_{t}, t) &= \sum^{m}_{i=1} w_{i}  f^{(r_{i},\alpha)}(x_{t}, t) + \beta(t)\left( \sum^{m}_{i=1} w_{i} \Delta^{(r_{i},\alpha)}(x, t) -\Delta^{(r(w),\alpha)}(x, t) \right),\\
     \label{eq:f-kla-approx}
      f^{(r,\alpha(\lambda))}(x_{t}, t) &=  (1 - \lambda)  f^{\mathrm{pre}}(x_{t}, t) + \lambda  f^{(r,\alpha)}(x_{t}, t)  + \beta(t)\left(\lambda   \Delta^{(r,\alpha)}(x, t) -\Delta^{(r,\alpha(\lambda))}(x, t)\right).
 \end{align}
 \vspace{-0.3cm}

\end{lemma}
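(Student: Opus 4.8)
The plan is to combine \cref{prop:fpost-fpre-relationship}, the approximation decomposition in \cref{eq:u-approx}, and the linearity of the approximate control term $\bar{u}$. First I would recall from \cref{prop:fpost-fpre-relationship} that for any reward $r$ and any regularization weight $\alpha$, $f^{(r,\alpha)}(x_t,t) = f^{\mathrm{pre}}(x_t,t) + u^{(r,\alpha)}(x_t,t)$, and from \cref{eq:u-approx} that $u^{(r,\alpha)}(x_t,t) = \bar{u}^{(r,\alpha)}(x_t,t) + \Delta^{(r,\alpha)}(x_t,t)$ with $\bar{u}^{(r,\alpha)}(x_t,t) = \nabla_{x_t}\, \mathbb{E}_{x_0 \sim p^{\mathrm{pre}}_{0|t}(\cdot|x_t)}[\, r(x_0)/\alpha \,]$. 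The crucial observation is that $\bar{u}$ is linear in the reward argument: since expectation, the scalar division by $\alpha$, and the gradient $\nabla_{x_t}$ are all linear operators, for $r(w) = \sum_{i=1}^m w_i r_i$ we immediately get $\bar{u}^{(r(w),\alpha)}(x_t,t) = \sum_{i=1}^m w_i\, \bar{u}^{(r_i,\alpha)}(x_t,t)$.

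For \cref{eq:f-mpa-approx}, I would write $f^{(r(w),\alpha)} = f^{\mathrm{pre}} + \bar{u}^{(r(w),\alpha)} + \Delta^{(r(w),\alpha)}$, substitute the linearity identity for $\bar{u}^{(r(w),\alpha)}$, and then replace each $\bar{u}^{(r_i,\alpha)} = u^{(r_i,\alpha)} - \Delta^{(r_i,\alpha)} = (f^{(r_i,\alpha)} - f^{\mathrm{pre}}) - \Delta^{(r_i,\alpha)}$. Using $\sum_{i=1}^m w_i = 1$ (so that $w \in \Delta_m$), the $f^{\mathrm{pre}}$ terms collect into exactly one copy of $f^{\mathrm{pre}}$, which cancels against the leading $f^{\mathrm{pre}}$, leaving $f^{(r(w),\alpha)} = \sum_{i=1}^m w_i f^{(r_i,\alpha)} + \big(\Delta^{(r(w),\alpha)} - \sum_{i=1}^m w_i \Delta^{(r_i,\alpha)}\big)$, as claimed.

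For \cref{eq:f-kla-approx}, the argument is analogous but exploits linearity in the \emph{inverse} regularization weight rather than in the reward: since $\alpha(\lambda) = \alpha/\lambda$, we have $r(x_0)/\alpha(\lambda) = \lambda \cdot r(x_0)/\alpha$, hence $\bar{u}^{(r,\alpha(\lambda))}(x_t,t) = \lambda\, \bar{u}^{(r,\alpha)}(x_t,t)$. Writing $f^{(r,\alpha(\lambda))} = f^{\mathrm{pre}} + \lambda \bar{u}^{(r,\alpha)} + \Delta^{(r,\alpha(\lambda))}$ and substituting $\bar{u}^{(r,\alpha)} = (f^{(r,\alpha)} - f^{\mathrm{pre}}) - \Delta^{(r,\alpha)}$ gives $f^{(r,\alpha(\lambda))} = (1-\lambda) f^{\mathrm{pre}} + \lambda f^{(r,\alpha)} + \big(\Delta^{(r,\alpha(\lambda))} - \lambda \Delta^{(r,\alpha)}\big)$. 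No real obstacle is expected here — the lemma is essentially a bookkeeping identity — so the only thing to be careful about is using the simplex constraint $\sum_i w_i = 1$ in the first part (without it the $f^{\mathrm{pre}}$ terms would not cancel cleanly) and keeping straight that the residual $\Delta$ terms are simply carried along as defined, not bounded here (their control is the content of \cref{lemma:approx-error}).
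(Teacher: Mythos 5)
Your proposal is correct and follows essentially the same route as the paper's proof: write $f^{(r,\alpha)} = f^{\mathrm{pre}} + \bar{u}^{(r,\alpha)} + \Delta^{(r,\alpha)}$, use linearity of $\bar{u}$ in the reward (resp.\ the scaling $\bar{u}^{(r,\alpha/\lambda)} = \lambda\,\bar{u}^{(r,\alpha)}$), and regroup using $\sum_i w_i = 1$ (resp.\ $\lambda + (1-\lambda) = 1$), carrying the $\Delta$ residuals along unbounded. Your explicit substitution $\bar{u}^{(r_i,\alpha)} = (f^{(r_i,\alpha)} - f^{\mathrm{pre}}) - \Delta^{(r_i,\alpha)}$ and your remark that the simplex constraint is what makes the $f^{\mathrm{pre}}$ terms cancel are exactly the (implicit) bookkeeping steps in the paper's algebra, so there is nothing substantively different to flag.
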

\vspace{-0.3cm}
Using the result in \cref{lem:f-as-sumf}, we now introduce our diffusion blend algorithms, with pseudo code provided in \cref{app:pseudo}.

\label{sec:alg}
\textbf{Diffusion Blend-Multi-Preference Alignment (DB-MPA) Algorithm:} Our goal is to solve the alignment problem in \cref{eq:finetune-objective} for an arbitrary reward function $r(w)$ with user-specified parameter $w$, without additional fine-tuning at inference time. This is equivalent to obtaining the diffusion term $f^{(r(w), \alpha)}$ and running the backward SDE in \cref{eq:backward-diffusion-after-ft}. At the fine-tuning stage (before deployment), we independently fine-tune the pre-trained model for each reward $(r_{i})^{m}_{i=1}$ with fixed $\alpha$, obtaining $m$ RL fine-tuned models $(\theta^{\mathrm{rl}}_{i})^{m}_{i=1}$ by solving the fine-tuning objective in \cref{eq:finetune-objective}. At inference, we use \cref{lem:f-as-sumf} to approximate $f^{(r(w), \alpha)}(x_{t}, t) \approx \sum^{m}_{i=1} w_{i}  f^{(r_{i},\alpha)}(x_{t}, t)$, where each $f^{(r_{i},\alpha)}$ is computed using the RL fine-tuned model $\theta^{\mathrm{rl}}_{i}$. We then generate samples by running the backward SDE in \cref{eq:backward-diffusion-after-ft}.

\textbf{Diffusion Blend-KL Alignment (DB-KLA) Algorithm:} Our goal is to solve the alignment problem in \cref{eq:finetune-objective} for an arbitrary regularization weight $\alpha(\lambda)$ with user-specified parameter $\lambda$, without additional fine-tuning at inference time. This is equivalent to running the backward diffusion in \cref{eq:backward-diffusion-after-ft} with $f^{(r, \alpha(\lambda))}$. At fine-tuning, we fine-tune the pre-trained model for reward $r$ and regularization weight $\alpha$, obtaining RL fine-tuned model $\theta^{\mathrm{rl}}$ from $\theta^{\mathrm{pre}}$. At inference, we use \cref{lem:f-as-sumf} to approximate $ f^{(r,\alpha(\lambda))}(x_{t}, t) \approx (1 - \lambda)  f^{\mathrm{pre}}(x_{t}, t) + \lambda  f^{(r,\alpha)}(x_{t}, t)$, where $f^{\mathrm{pre}}$ and $ f^{(r,\alpha)}$ are computed using $\theta^{\mathrm{pre}}$ and $\theta^{\mathrm{rl}}$, respectively. We then generate samples by running the backward SDE in \cref{eq:backward-diffusion-after-ft}.

\subsection{Inferrence time efficient variant}
While DB-MPA and DB-KL successfully achieve reward alignment through score merging, this approach requires evaluating all $m$ diffusion models at each denoising step, resulting in $m\times$ computational overhead during inference. To address this limitation, we propose DB-MPA-with-LoRA-Sampling (DB-MPA-LS), a novel algorithm that approximates the score merging process by randomly sampling reward fine-tuned LoRA adapters at each denoising step with probabilities proportional to their assigned weights. This approach reduces the inference cost to that of the original pre-trained Stable Diffusion model, eliminating the multiplicative overhead inherent in inference-time realignment methods, including our DB-MPA and existing LLM variants \cite{liu2024decoding, shi2024decoding}. Note that this sampling approximation cannot be applied to DB-KL since the KL reweighting terms may be negative. The key insight is that unlike LLM realignment which mixes probabilities over discrete and finite tokens, diffusion models operate through continuous stochastic processes where the noise-adding nature enables a different mathematical treatment, which we show by the following proposition.

\begin{proposition}\label{prop:db-mpa-ls}
    For the Lipschitz continuous functions $f_1$ and $f_2$, the following two SDE have the same marginal probability $p_{X_t^1}=p_{X_t^2}$ for $\forall\ t\in[0,T]$. SDE 1 is $d X_t^1 = (af_1(X_t^1)+(1-a)f_2(X_t^1)) dt + \sigma(t) d\omega_t$, with $X_0\sim p_0$, $t\in[0,T]$, and $\{\omega_t\}$ being the Winner process. SDE 2 is $d X_t^2 = (Y_tf_1(X_t^2)+(1-Y_t)f_2(X_t^2)) dt + \sigma(t) d\omega_t$, with $X_0\sim p_0$, $t\in[0,T]$, and $\{\omega_t\}$ being the Winner process, where $Y_t$ is a Bernoulli random variable with probability $a$ to be $1$ and probability $1-a$ to be $0$, and $Y_t$ is independent of $\{X_t^2\}_t$ and $Y_s$ for any $s\neq t$.
\end{proposition}

\begin{remark}
    Without loss of generality, we present the theoretical result for the two-reward case ($m=2$), as the extension to arbitrary finite $m$ rewards follows straightforwardly by replacing the Bernoulli variable with a categorical random variable. Details of the proof are in \cref{app:proof_prop2}.
\end{remark}

% \input{05-inferrence-efficient-DB}
% \vspace{-0.3cm}
\section{Experiments}
\label{sec:experiment}
% \vspace{-0.3cm}
In this section, we present comprehensive experimental evaluations that demonstrate the superior performance of our DB-MPA and DB-KLA algorithms compared to the baseline models. 

\textbf{Reward models.} We use four reward models in our experiments: $(i)$ \textit{ImageReward}~\citep{xu2024imagereward}, which measures the text-image alignment and is used in its original form; $(ii)$ \textit{VILA}~\citep{ke2023vila}, which measures the aesthetic quality of generated images and outputs in $[0, 1]$, is rescaled to $[-2, 2]$ via $r \mapsto 4r - 2$ to normalize its influence relative to other rewards; and $(iii)$ \textit{PickScore}~\citep{kirstain2023pick}, which measures how well an image generated from a text prompt aligns with human preferences, is shifted by $-19$ to match the scale of other rewards. $(iv)$ We further test our algorithm on a \textit{JPEG compressibility} reward, which opposes aesthetics by favoring smooth images, enabling analysis of adversarial alignment.

\textbf{Baselines:} We compare the performance of our algorithms with the following baseline algorithms:  
$(i)$ Rewarded Soup (RS) \citep{rame2023rewarded}, $(ii)$ CoDe \citep{singh2025code},  a gradient-free guidance algorithm with look-ahead search, where we use $N=20$ particles for the search and $B=5$ look-ahead steps. $(iii)$ Reward gradient-based guidance (RGG) \citep{chung2023dps, kim2025test}, and $(iv)$ Multi-Objective RL (MORL) \citep{rame2023rewarded,wu2023fine}. Details of these baselines are given in \cref{baseline}. Note that we report MORL performance only as an oracle baseline, illustrating the maximum alignment achievable.

\textbf{Prompt datasets:} We use two benchmark datasets in our experiments. $(i)$ We first select the color subset from DrawBench \citep{saharia2022photorealistic}, comprising 25 prompts out of the full 183 prompts across 11 categories. RL training can reliably converge at this small-scale setup, which aligns with our theoretical assumption that RL converges to the closed-form solution of \cref{eq:finetune-objective}, while our DB algorithm is interpolating between those optimal solutions under individual reward functions. For evaluation, we generate a test set of 1,000 prompts using a pipeline similar to GenEval with random color-object combinations. The candidate lists of colors and objects are generated by GPT-4 \citep{achiam2023gpt} to be semantically similar to the training set (implementation details and full list in our code). $(ii)$ We also validate performance of DB on the GenEval dataset \citep{kirstain2023pick}, which contains 550 prompts across six compositional tasks: single objects, two objects, counting, colors, spatial positions, and color attribution. An additional 700 test prompts are generated using the official GenEval prompt generation script.

\textbf{Training and evaluation details:} We use Stable Diffusion v1.5 (SDv1.5) \citep{rombach2022stablediffusion} as the base model for our experiments, which is a text-to-image model capable of generating high-resolution images. We use the DPOK algorithm \citep{fan2023dpok} for RL fine-tuning.  For the experimental results given in the main paper, we use the 1000 test prompts. Details of the implementation, including training configurations and hyperparameters, are given in  \cref{app:hyper}.

\begin{table}[h!]
\vspace{-0.2cm}
\centering
\caption{\small Quantitative comparison of DB-MPA and baseline methods} 
{\scriptsize
\setlength{\tabcolsep}{3pt}
\begin{tabular}{ll
cc cc cc cc cc cc cc
}
\toprule
\multicolumn{2}{c}{\textbf{}} 
& \multicolumn{2}{c}{\textbf{SD}} 
& \multicolumn{2}{c}{\textbf{MORL}} 
& \multicolumn{2}{c}{\textbf{DB-MPA}} 
& \multicolumn{2}{c}{\textbf{DB-MPA-LS}} 
& \multicolumn{2}{c}{\textbf{RS}} 
& \multicolumn{2}{c}{\textbf{CoDe}} 
& \multicolumn{2}{c}{\textbf{RGG}} \\
\cmidrule(lr){3-4} \cmidrule(lr){5-6} \cmidrule(lr){7-8} \cmidrule(lr){9-10}
\cmidrule(lr){11-12} \cmidrule(lr){13-14} \cmidrule(lr){15-16}
& & $r_1$ & $r_2$ & $r_1$ & $r_2$ & $r_1$ & $r_2$ & $r_1$ & $r_2$ & $r_1$ & $r_2$ & $r_1$ & $r_2$ & $r_1$ & $r_2$ \\
\midrule
\multirow{5}{*}{Reward ($\uparrow$)} 
& $w{=}0.0$ 
& -0.22 & -0.14
& \underline{-0.19} & \textbf{0.47} 
& \underline{-0.19} & \textbf{0.47}  
& \underline{-0.19} & \textbf{0.47}  
& \underline{-0.19} & \textbf{0.47}  
& \textbf{-0.02} & 0.01 
& -0.16 & 0.42 \\
& $w{=}0.2$ 
& -0.22 & -0.14
& \textbf{0.24} & \textbf{0.38} 
& 0.05 & \textbf{0.38}
& 0.07 & 0.36 
& -0.20 & 0.23 
& \underline{0.16} & 0.00 
& -0.25 & 0.32 \\
& $w{=}0.5$ 
& -0.22 & -0.14
& \textbf{0.34} & \underline{0.14} 
& 0.27 & \textbf{0.21} 
& \underline{0.31} & \underline{0.14} 
& -0.12 & -0.04 
& 0.30 & -0.04
& -0.08 & 0.14 \\
& $w{=}0.8$ 
& -0.22 & -0.14
& \textbf{0.43} & -0.11 
& 0.36 & \textbf{-0.01} 
& \underline{0.37} & \underline{-0.04} 
& 0.21 & -0.14 
& 0.41 & -0.10
& 0.21 & \underline{-0.04} \\
& $w{=}1.0$ 
& -0.22 & -0.14
& \textbf{0.41} & \underline{-0.17} 
& \textbf{0.41} & \underline{-0.17}  
& \textbf{0.41} & \underline{-0.17} 
& \textbf{0.41} & \underline{-0.17} 
& 0.40 & \textbf{-0.15}
& 0.34 & -0.18 \\
\midrule
\multicolumn{2}{c}{Inference Time ($\downarrow$ sec/img)} 
& \textbf{5.46} &  & \textbf{5.46} &  & 11.11 &  & \underline{5.64} &  & \textbf{5.46} &  & 185.26 &  & 121.58 & \\
\bottomrule
\end{tabular}
}
\vspace{1ex}
\label{tab:grouped_results}
% \end{wraptable}
\end{table}

We first consider two reward functions ($m = 2$), with $r_{1}$  as ImageReward \citep{xu2024imagereward}, which measures text-image alignment, and $r_{2}$ as VILA \citep{kirstain2023pick}, which measures aesthetics. During the inference time, the user specifies a preference weight $w \in [0,1]$ to obtain data samples aligned to the reward $w r_{1} + (1-w) r_{2}$. We fix  $\alpha = 0.1$ for these experiments, where $\alpha$ is the KL weight.   

In \cref{fig:DB-MPA-pareto}(b), we present the Pareto-front of DB-MPA against baseline algorithms under the \textit{Short-DrawBench} setting, evaluated on the 1k test prompts. For DB-MPA and RS, we evaluate the performance for $w \in \{0.1, \ldots, 0.9\}$. For other baselines, we evaluate their performance for $w \in \{0.2, 0.5, 0.8\}$ due to their high inference cost. Our DB-MPA algorithm consistently outperforms the baseline in all these experiments and achieves a Pareto-front very close to that of MORL, which represents the theoretical optimum obtainable by RL fine-tuning. We further evaluate the lightweight variant DB-MPA-LS, which achieves nearly identical Pareto performance while matching the inference speed of standard Stable Diffusion. As shown in \cref{fig:DB_LS}, the outputs of DB-MPA and DB-MPA-LS are also visually close. In \cref{tab:grouped_results}, we provide a quantitative comparison of this Pareto-front result. If we take the weighted reward $r(w)$ as a metric of comparison, for $w=0.5$, DB-MPA ($0.42$) has close performance to DB-MPA-LS $0.39$ and outperforms RS, CoDe, and RGG by \textbf{3.92$\times$}, \textbf{1.95$\times$}, and \textbf{1.33$\times$}, respectively. 
\cref{tab:grouped_results} also shows the inference time comparison. DB-MPA uses two fine-tuned models, making its inference time about twice that of Stable Diffusion. CoDe and RGG, though single-model methods, incur far higher costs due to multi-particle sampling and gradient steps. The lightweight DB-MPA-LS matches DB-MPA’s performance while running at nearly the same speed as Stable Diffusion.
As analyzed in \cref{app:number_reward}, DB-MPA-LS maintains performance close to DB-MPA as the number of rewards increases, with DB/LS ratios of the average gains under each fixed reward setting being 1.02, 1.15, 1.08 for 2, 3, 4 rewards, while RS performance drops significantly (DB/RS = 3.3, 6.4, 8.2).

\begin{figure*}[t]
    \centering
    \includegraphics[width=0.95\textwidth]{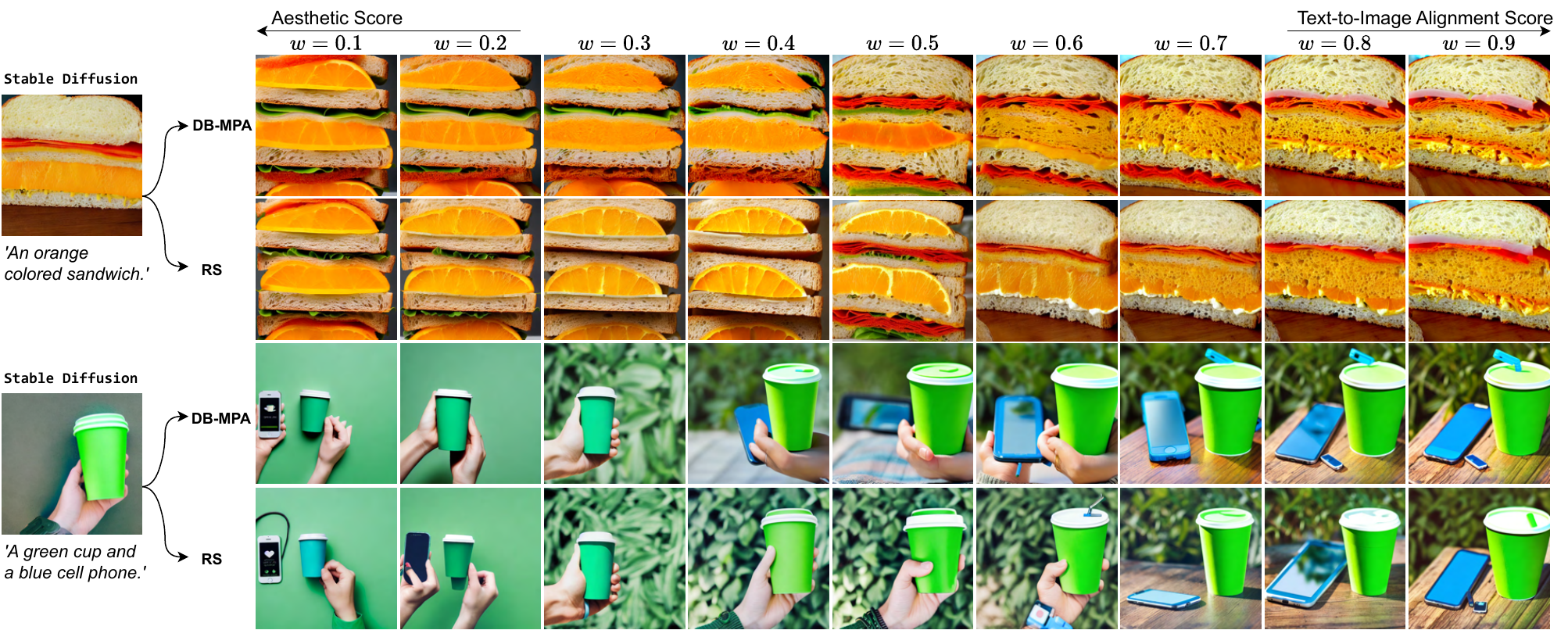}
    \vspace{-0.1cm}
    \caption{\small Illustration of the smooth control of DB-MPA to generate images aligned with $r(w)$ for any $w \in [0, 1]$. DB-MPA generates images that are better aligned with both rewards, especially for  $w \in [0.4, 0.8]$. RS generates images with wrong interpretation objects (orange) or missing objects (cellphone).}
    \label{fig:mr_rs_db}
    \vspace{-0.5cm}
\end{figure*}

\begin{wrapfigure}[17]{l}{0.49\textwidth}
    \centering
    \vspace{-0.1cm}
    \includegraphics[width=0.45\textwidth]{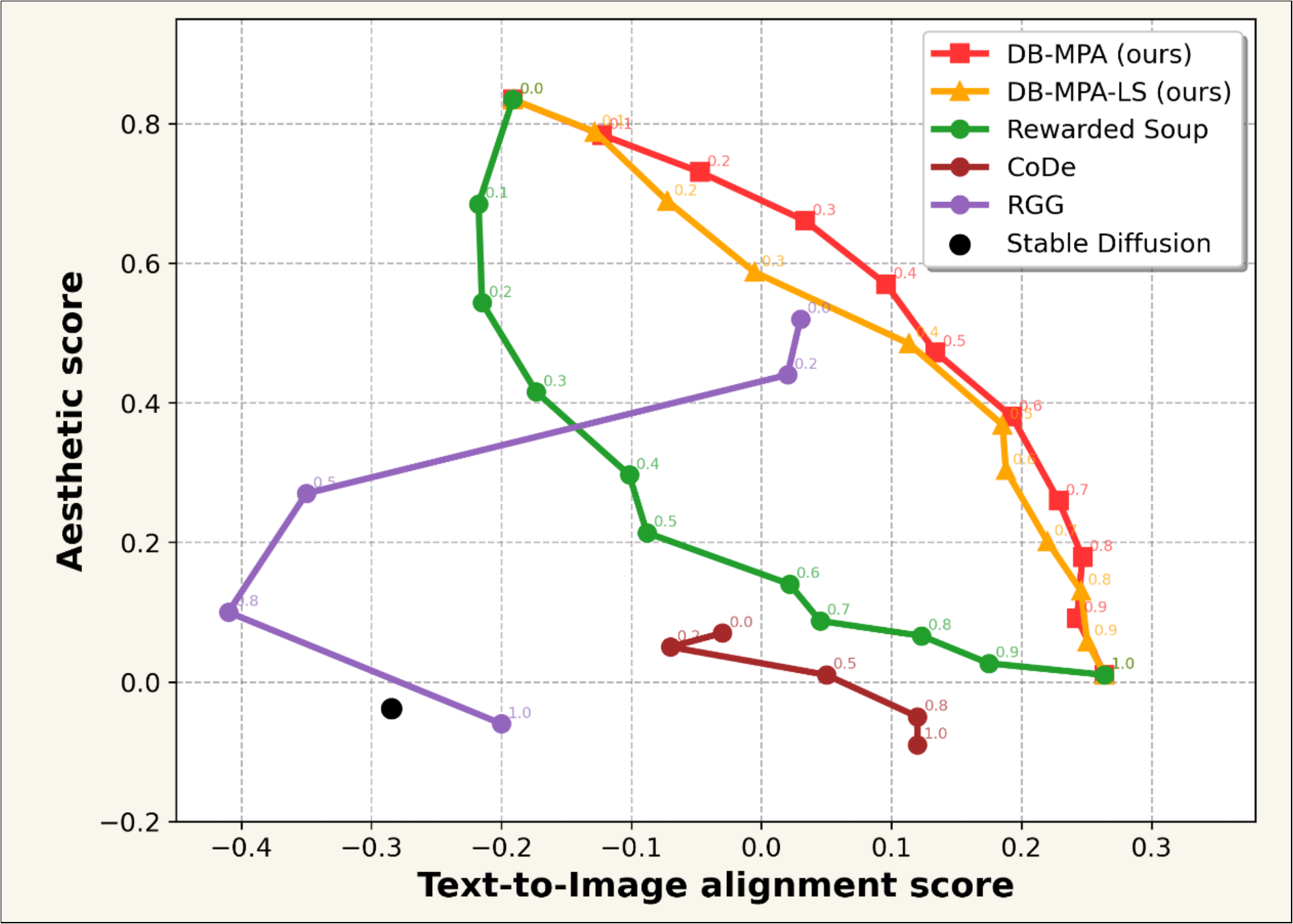}
    \caption{{\small Pareto-front comparison of DB-MPA and DB-MPA-LS algorithm with other baselines, evaluated on GenEval test prompt set.}}
    \label{fig:db_mpa_ls}
    % \vspace{0.3cm}
\end{wrapfigure}

We further scale DB-MPA to the GenEval benchmark by fine-tuning models on all 550 prompts. Evaluation on the 700 held-out GenEval test prompts (\cref{tab:geneval_test} and its corresponding Pareto boundary in \cref{fig:db_mpa_ls}) shows that both DB-MPA and DB-MPA-LS consistently dominate the baselines across all preference weights. We also verify scalability on a larger base model (SDXL) in \cref{app:sdxl_extension}, where a single-prompt experiment demonstrates consistent performance gains.

\cref{fig:mr_rs_db} illustrates the smooth control of DB-MPA in generating images aligned with $r(w)$ for any $w \in [0,1]$. We use RS as the primary baseline due to the high inference cost of other methods. DB-MPA consistently produces images better aligned with both $r_1$ and $r_2$, particularly for $w \in [0.4, 0.8]$. Additional results in \cref{app:visual_MPA} and \cref{smooth} further confirm that DB-MPA generates images of higher quality than the baseline and comparable to those from MORL.

We further examine a challenging case of conflicting rewards by introducing the \textit{JPEG compressibility} reward, which favors smooth, low-detail images and directly opposes the VILA aesthetic reward that emphasizes fine-grained visual quality. As shown in \cref{app:jpeg_vila}, DB-MPA maintains superior performance even under this adversarial setting. We also extend our study to the three-reward case ($m=3$) by incorporating \textit{PickScore} \citep{kirstain2023pick}, which measures human preference alignment. The corresponding results, provided in \cref{app:3_rewards}, validate the generalizability of our method to multi-reward settings while consistently achieving superior performance.

\vspace{0.1cm}
\subsection{DB-KLA Algorithm Experiments}
% \vspace{-0.2cm}
\begin{figure}[h]
    \centering
    \includegraphics[width=\textwidth]
    {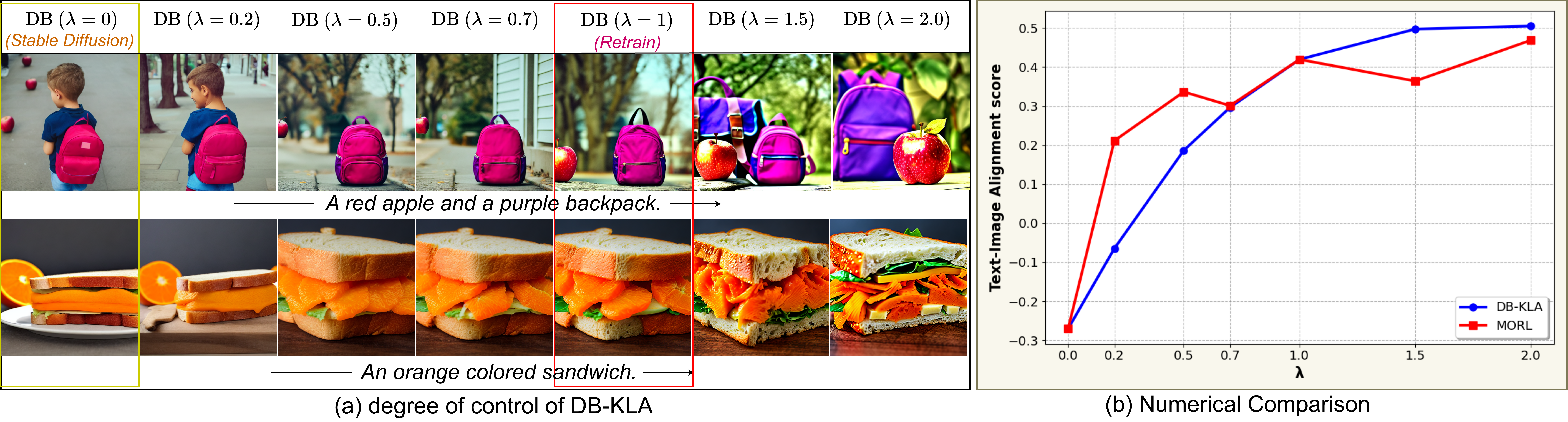}
    \vspace{-0.4cm}
    \caption{{\small (a) Images generated by DB-KLA for different values of $\lambda$. We consider SDv1.5 as $\lambda = 0$ (infinite regularization weight), and as we increase $\lambda$, the aligned model moves further away from SDv1.5. These examples demonstrate that DB-KLA can smoothly control the level of text-to-image alignment by selecting different $\lambda$, without any additional fine-tuning. (b) The quantitative comparison of DB-KLA and $\lambda$-specific RL fine-tuned models among the test prompt set. Result for the train set is given in  \cref{app:db-kla-results}.}}
    \label{fig:deta_kl_control}
    \vspace{-0.3cm}
\end{figure}

For the DB-KLA algorithm, we first fine-tune the baseline SVv1.5 model with the ImageReward and KL regularization weight $\alpha = 0.1$, using the \textit{Short-DrawBench} prompts for fine-tuning and evaluation. In \cref{fig:DB-KLA-pareto}(c), we show that \textit{even without any additional fine-tuning, the images generated by DB-KLA are similar to those of  $\lambda$-specific  RL fine-tuned models}. In \cref{fig:deta_kl_control}(a), we illustrate that \textit{DB-KLA enables smooth and continuous control over alignment strength via the rescaling factor $\lambda$}. Additional experiment results are given in  \cref{app:db-kla-results}. In \cref{fig:deta_kl_control}(b), we can observe that the average reward obtained by DB-KLA closely follows that of the MORL retrained model. As observed in \cref{fig:deta_kl_control}(a), in scenarios where the retrained model fails to fully align, DB-KLA with a stronger alignment setting ($\lambda > 1$) can generate more semantically accurate outputs, such as correcting object colors or preserving scene elements. This highlights its potential as a diagnostic tool for understanding and mitigating under- or over-optimization in reward-guided diffusion finetuning.
% \vspace{-0.4cm}
\section{Conclusions}
% \vspace{-0.2cm}

We introduced Diffusion Blend, a framework for inference-time multi-preference alignment in diffusion models that supports user-specified reward combinations and regularization strengths without requiring additional fine-tuning. Our proposed algorithms, DB-MPA, DB-MPA-LS, and DB-KLA, consistently outperform existing baselines and closely match the performance of individually fine-tuned models. Notably, DB-MPA-LS eliminates the linear scaling of inference time that plagues traditional inference time realignment methods, enabling efficient multi-preference alignment at scale.

\section*{Acknowledgement}

This material is based upon work partially supported by the National Science Foundation (NSF) under grants CNS-2328395, FuSe2-2425399, and NSF CAREER EPCN-2045783, the US Army Contracting Command under Contract Numbers W911NF2520046, W911NF2210151, and W911NF2120064, and the US Office of Naval Research under contract N00014-24-12615.

This research was conducted using the advanced computing resources provided by the Texas A\&M High Performance Research Computing.
\bibliography{iclr2026_conference}
\bibliographystyle{plainnat}

\newpage 

\appendix

\section{Technical Proofs}
\label{app:technical-proof}

In this section, we provide the theoretical details referenced in Section 4. We begin by analyzing the solution to \cref{eq:finetune-objective}, followed by a discussion of the Jensen gap approximation error introduced by interchanging the expectation operator $\mathbb{E}[\cdot]$ and the exponential function $\exp(\cdot)$. We then describe how to approximate a general function $f^{(r(w), \alpha(\lambda))}$ under the Jensen gap approximation. Finally, we provide an analysis for the sampling-based approximation algorithm based on SDE theory showing its equivalent marginal distribution.

\subsection[Proof of Proposition 1]{Proof of \cref{prop:fpost-fpre-relationship}}
\begin{proof}[\textbf{Proof of Proposition \ref{prop:fpost-fpre-relationship}}]
    By definition, $p^{\mathrm{pre}}_{t}(x) = \int_{y}  p^{\mathrm{pre}}_{0}(y) p_{\epsilon_{t}}(x-y)~\mathrm{d}y,  p^{\mathrm{tar}}_{t}(x) = \int_{y}  p^{\mathrm{tar}}_{0}(y) p_{\epsilon_{t}}(x-y)~\mathrm{d}y$.  We also have, $ p^{\mathrm{pre}}_{0|t}(x_{0}|x_{t}) = \frac{ p^{\mathrm{pre}}_{0,t}(x_{0},x_{t}) }{p^{\mathrm{pre}}_{t}(x_{t})} = \frac{ p^{\mathrm{pre}}_{0}(x_{0}) p_{\epsilon_{t}}(x_{t}-x_{0}) }{p^{\mathrm{pre}}_{t}(x_{t})}$. Now, with an appropriate normalization constant $C$,
    \begin{align*}
        p^{\mathrm{tar}}_{t}(x) &= \int_{y}  p^{\mathrm{tar}}_{0}(y) p_{\epsilon_{t}}(x-y)~\mathrm{d}y = C \int_{y}  p^{\mathrm{pre}}_{0}(y) \mathrm{exp}\left(\frac{r(y)}{\alpha} \right)   p_{\epsilon_{t}}(x-y)~\mathrm{d}y \\
        &= C  p^{\mathrm{pre}}_{t}(x) \int_{y}   \mathrm{exp}\left(\frac{r(y)}{\alpha} \right) \frac{p^{\mathrm{pre}}_{0}(y)   p_{\epsilon_{t}}(x-y)}{p^{\mathrm{pre}}_{t}(x)} ~\mathrm{d}y =   C  p^{\mathrm{pre}}_{t}(x) \int_{y}   \mathrm{exp}\left(\frac{r(y)}{\alpha} \right)p^{\mathrm{pre}}_{0|t}(y|x) ~\mathrm{d}y \\
        &=  C  p^{\mathrm{pre}}_{t}(x)  \mathbb{E}_{x_{0} \sim p^{\mathrm{pre}}_{0|t}(\cdot|x)} \left[  \mathrm{exp}\left(\frac{r(x_{0})}{\alpha} \right) \right].
    \end{align*}
    From this, we get,
        \begin{align}
        \nabla_{x} \log  p^{\mathrm{tar}}_{t}(x) =  \nabla_{x} \log  p^{\mathrm{tar}}_{t}(x) +  \nabla_{x} \log \mathbb{E}_{x_{0} \sim p^{\mathrm{pre}}_{0|t}(\cdot|x)} \left[  \mathrm{exp}\left(\frac{r(x_{0})}{\alpha} \right) \right].
    \end{align}
    The result now follows from the definition of $f^{(r,\alpha)}(x_t)=-\frac{1}{2}\beta(t) x_{t} - \beta(t)\nabla_{x_{t}}\log p_t^{\mathrm{tar}}(x_t)$ and $f^{\mathrm{pre}}(x_t)=-\frac{1}{2}\beta(t) x_{t} - \beta(t)\nabla_{x_{t}}\log p_t^{\mathrm{pre}}(x_t)$. 
\end{proof}

We also present a more general formulation of \cref{prop:fpost-fpre-relationship} to offer a clearer understanding.
\begin{proposition} [General statement of \cref{prop:fpost-fpre-relationship}]\label{prop:general}
    Let $X$ be a random variable distributed according to $p_{0}(x)$, $Y$ be a random variable distributed as $q_{0}(y) = C p_0(y)\exp(r(y)/\alpha)$, and $Z$ be an independent noise. If the probability density of $X+Z$ is $p_t$, then the probability density of $Y+Z$ is $q_t(x) =C p_t(x)\mathbb{E}[\exp(\frac{r(X)}{\alpha})|X+Z=x]$. The score of $Y+Z$ is given by $\nabla_x\log\mathbb{E}[\exp(\frac{r(X)}{\alpha})|X+Z=x]+\nabla\log p_t(x)$.
\end{proposition}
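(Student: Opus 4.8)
The plan is to mirror the proof of \cref{prop:fpost-fpre-relationship} but phrased entirely in terms of the generic triple $(X,Y,Z)$, since nothing in that argument used the diffusion structure beyond three facts: $Y$ has a density that is a reward-tilt of $p_0$, $Z$ is independent of $X$ (and of $Y$), and $Z$ admits a density $p_Z$. First I would write the density of $Y+Z$ as the convolution $q_t(x) = \int q_0(y)\, p_Z(x-y)\,\mathrm{d}y$ and substitute $q_0(y) = C\, p_0(y)\exp(r(y)/\alpha)$, giving $q_t(x) = C \int p_0(y)\exp(r(y)/\alpha)\, p_Z(x-y)\,\mathrm{d}y$. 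Likewise $p_t(x) = \int p_0(y')\, p_Z(x-y')\,\mathrm{d}y'$ is the density of $X+Z$.

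The key step is to insert the factor $p_t(x)$ and rewrite $q_t(x) = C\, p_t(x) \int \exp(r(y)/\alpha)\, \dfrac{p_0(y)\, p_Z(x-y)}{p_t(x)}\,\mathrm{d}y$. By Bayes' rule for the independent-sum model, $\dfrac{p_0(y)\, p_Z(x-y)}{p_t(x)}$ is exactly the conditional density of $X$ given $X+Z=x$; hence the integral equals $\mathbb{E}[\exp(r(X)/\alpha)\mid X+Z=x]$, which yields the claimed identity $q_t(x) = C\, p_t(x)\, \mathbb{E}[\exp(r(X)/\alpha)\mid X+Z=x]$. Taking logarithms and differentiating in $x$ annihilates the constant $C$ and splits the score additively into $\nabla_x \log p_t(x)$ plus $\nabla_x \log \mathbb{E}[\exp(r(X)/\alpha)\mid X+Z=x]$, which is the second assertion. \cref{prop:fpost-fpre-relationship} is then recovered by setting $X = x^{\mathrm{pre}}_0$, $Y = x^{\mathrm{tar}}_0$, $Z = \epsilon_t$, invoking $p^{\mathrm{tar}}_0 \propto p^{\mathrm{pre}}_0 \exp(r/\alpha)$ from \cref{eq:p-tar-tilted}, and using the definitions of $f^{\mathrm{pre}}$, $f^{(r,\alpha)}$ together with the standing assumption that for large $T$ both forward noisings have a Gaussian terminal law, so the two reverse SDEs share all coefficients except this score correction.

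The main obstacle here is not a hard estimate but keeping track of the regularity conditions that make each manipulation legitimate: integrability of the tilt so that $C<\infty$ and $q_0$ is a genuine density; strict positivity of $p_t$ wherever we divide by it (automatic when $p_Z>0$, e.g.\ Gaussian noise); the Fubini/Tonelli interchange used to factor out $p_t(x)$; and the differentiation-under-the-integral-sign step needed both to define the scores and to commute $\nabla_x$ with the conditional expectation. I would fold these into a short list of standing hypotheses — finite tilt, $Z$ with a positive smooth density, $r$ of at most mild growth — exactly the regime already assumed for \cref{prop:fpost-fpre-relationship}; under them the displayed chain of equalities is essentially immediate, so the proof is a few lines.
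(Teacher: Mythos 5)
Your proposal is correct and matches the paper's own proof essentially line for line: both write $q_t$ as the convolution of the tilted density with $p_Z$, factor out $p_t(x)$, identify $p_0(y)p_Z(x-y)/p_t(x)$ as the conditional density of $X$ given $X+Z=x$, and then take $\nabla_x\log$ to obtain the score decomposition. The regularity remarks you add (positivity of $p_Z$, finiteness of the tilt, differentiation under the integral) are sensible housekeeping but do not change the argument.
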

\begin{proof}
    Note $X+Z$ is distributed w.r.t. the probability density:
    \begin{equation*}
        p_t(x) = \int p_0(y)p_Z(x-y) dy.
    \end{equation*}
    Density of $Y+Z$ is:
    \begin{equation*}
    \begin{aligned}
        &q_t(x) \\
        =& \int q_0(y)p_Z(x-y) dy \\
        =& C \int p_0(y)\exp(\frac{r(y)}{\alpha})p_Z(x-y) dy \\
        =& C p_t(x)\int \exp(\frac{r(y)}{\alpha})\frac{p_0(y)p_Z(x-y)}{\int p_0(s)p_Z(x-s) ds} dy\\
        =& C p_t(x)\int \exp(\frac{r(y)}{\alpha})p_{X|X+Z=x}(X=y) dy\\
        =& C p_t(x)\mathbb{E}\Big[\exp(\frac{r(X)}{\alpha})\Big|X+Z=x\Big].
    \end{aligned}
    \end{equation*}
\end{proof}
\begin{remark}
    Consider the variance-exploding forward process proposed by \cite{song2021score}, i.e. $X_t = X_0 + \sqrt{\bar\alpha_t}Z$ with $Z \sim \mathcal{N}(0, I)$. \cref{prop:general} implies that for two different initial distributions $X \sim p_0$ and $Y \sim q_0 \propto p_0 \exp(r/\alpha)$, both following the same noise-exploding forward process, the marginal probability density $q_t(y)$ of $Y_t = Y_0 + \sqrt{\bar\alpha_t}Z$ is equal to the product of the density of $X_t$ and a posterior mean term $\mathbb{E}[\exp(r(X_0)/\alpha) \mid X_t = y]$. Recall that the reverse process involves adding an extra score term $\nabla_x \log p_t(x)$ into the drift. \cref{prop:general} suggests that approximating $p_0^{\text{pre}} \exp(r/\alpha)$ in the reverse process can be achieved by replacing the original score $\nabla \log p_t(x)$ in $X_t$, with a shifted score $\nabla_x \log p_t(x) + \nabla_x \log \mathbb{E}[\exp(r(X_0)/\alpha) \mid X_t = x]$.
    
    The same analysis applies to the variance-preserving forward process used in DDPM \citep{ho2020denoising}, defined as $X_t = \sqrt{\bar{\alpha}_t} X_0 + \sqrt{1 - \bar{\alpha}_t} Z$ with $Z \sim \mathcal{N}(0, I)$. In this case, the same conclusion follows by replacing the random variable $X$ in \cref{prop:general} with $\sqrt{\bar{\alpha}_t} X_0$.
\end{remark}

\subsection{Approximation Error Upper bound for \cref{{eq:u-approx}}}
\label{app:approx-error}

\begin{lemma}
    Decompose the reward function into two parts 
    \begin{equation*}
        \frac{r(x_{0}^{\mathrm{pre}})}{\alpha} = \Tilde{r}(x_{t}^{\mathrm{pre}}, t) + \eta(\omega, x_{t}^{\mathrm{pre}}, t)
    \end{equation*}
    where $ \Tilde{r}(x^{\mathrm{pre}}_{t}, t)=\mathbb{E}[\frac{r(x_{0}^{\mathrm{pre}})}{\alpha}|x_t^{\mathrm{pre}}]$ only depends on $x_t^{\mathrm{pre}}$ and $\eta(\omega, x_{t}^{\mathrm{pre}}, t)=\frac{r(x^{\mathrm{pre}}_{0})}{\alpha}-\Tilde{r}(x_{t}^{\mathrm{pre}}, t) $ contains randomness $\omega$ induced from the noise injection process. 
    Let $p_{R|t}$ denote the conditional distribution of random variable $R:=r(x_0^{\mathrm{pre}})/\alpha$ given $x_t^{\mathrm{pre}}$. Then,
    $$ \lVert \nabla_x\log\mathbb{E}[\exp(r(x_0^{\mathrm{pre}})/\alpha)|x^{\mathrm{pre}}_t=x] - \nabla_x \mathbb{E}[r(x_0^{\mathrm{pre}})/\alpha|x^{\mathrm{pre}}_t=x]\rVert\leq L_{t,1}(x)\times L_{t,2}(x)+L_{t,3}(x),$$
    where
    $$ L_{t,1}(x)=\sqrt{\mathbb{E}[\lVert \nabla_{x}\eta(\omega, x^{\mathrm{pre}}_t, t) \rVert_2^2|x_t^{\mathrm{pre}}=x]},$$
    $$ L_{t,2}(x)=\frac{\sqrt{\mathrm{Var}(\exp(r(x_0^{\mathrm{pre}})/\alpha)|x_t^{\mathrm{pre}}=x)}}{\mathbb{E}[\exp(r(x_0^{\mathrm{pre}})/\alpha)|x_t^{\mathrm{pre}}=x]},$$
    $$L_{t,3}(x)=(1+\frac{1}{\alpha})\sup_r\lVert\nabla_x\log p_{R|t}(r|x_t^{\mathrm{pre}}=x)+\nabla_r\log p_{R|t}(r|x_t^{\mathrm{pre}}=x)\rVert. $$
\end{lemma}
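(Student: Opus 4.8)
The goal is to bound the discrepancy between $\nabla_x \log \mathbb{E}[\exp(R) \mid x_t = x]$ and $\nabla_x \mathbb{E}[R \mid x_t = x]$, where $R = r(x_0^{\mathrm{pre}})/\alpha$. The plan is to split this discrepancy into two pieces by inserting an intermediate quantity: the difference between $\nabla_x \log \mathbb{E}[\exp(R) \mid x_t = x]$ and $\mathbb{E}[\nabla_x R \mid x_t = x]$ (a ``differentiation-under-expectation'' term), plus the difference between $\mathbb{E}[\nabla_x R \mid x_t = x]$ and $\nabla_x \mathbb{E}[R \mid x_t = x]$ (an ``exchange of $\nabla_x$ and conditional expectation'' term). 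The first piece I expect to control by $L_{t,1}\times L_{t,2}$, and the second by $L_{t,3}$.

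For the first piece, I would write $\nabla_x \log \mathbb{E}[\exp(R)\mid x_t=x]$ using the decomposition $R = \tilde r(x_t,t) + \eta(\omega,x_t,t)$, with $\tilde r(x,t) = \mathbb{E}[R\mid x_t=x]$. Since $\tilde r(x_t,t)$ is $\sigma(x_t)$-measurable it factors out of the conditional expectation, so $\mathbb{E}[\exp(R)\mid x_t=x] = \exp(\tilde r(x,t))\,\mathbb{E}[\exp(\eta)\mid x_t=x]$ and hence $\nabla_x \log \mathbb{E}[\exp(R)\mid x_t=x] = \nabla_x \tilde r(x,t) + \nabla_x \log \mathbb{E}[\exp(\eta)\mid x_t=x]$. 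Assuming we may differentiate inside the expectation, $\nabla_x \log \mathbb{E}[\exp(\eta)\mid x_t=x] = \frac{\mathbb{E}[\exp(\eta)\nabla_x\eta \mid x_t=x]}{\mathbb{E}[\exp(\eta)\mid x_t=x]}$, which is a weighted average of $\nabla_x\eta$ against the tilted measure with weights $\exp(\eta)$. The ordinary average $\mathbb{E}[\nabla_x\eta\mid x_t=x]$ equals $\nabla_x \mathbb{E}[\eta \mid x_t=x]=0$ because $\mathbb{E}[\eta\mid x_t=x]\equiv 0$ by construction. So the tilted-minus-untilted average is a covariance-type object: $\mathrm{Cov}_{\text{cond}}(\nabla_x\eta, \exp(\eta))/\mathbb{E}[\exp(\eta)\mid x_t=x]$. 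Applying Cauchy–Schwarz entrywise (or in the appropriate vector form) bounds its norm by $\sqrt{\mathbb{E}[\|\nabla_x\eta\|^2\mid x_t=x]} \cdot \sqrt{\mathrm{Var}(\exp(\eta)\mid x_t=x)}/\mathbb{E}[\exp(\eta)\mid x_t=x]$. Finally, since $\eta = R - \tilde r(x,t)$ differs from $R$ by a constant (in the conditional measure), $\nabla_x\eta$ reappears as the $L_{t,1}$ term, and the coefficient of variation of $\exp(\eta)$ equals that of $\exp(R)$ (shifting by a multiplicative constant leaves the ratio $\sqrt{\mathrm{Var}}/\mathbb{E}$ unchanged), giving exactly $L_{t,2}$. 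Hence this piece is $\le L_{t,1}(x)\,L_{t,2}(x)$.

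For the second piece, I need to bound $\|\mathbb{E}[\nabla_x R \mid x_t=x] - \nabla_x \mathbb{E}[R\mid x_t=x]\|$. Writing both in terms of the conditional density $p_{R|t}(r\mid x)$ of $R$ given $x_t=x$: on one hand $\nabla_x\mathbb{E}[R\mid x_t=x] = \nabla_x \int r\, p_{R|t}(r\mid x)\,dr = \int r\, \nabla_x p_{R|t}(r\mid x)\,dr = \int r\, p_{R|t}(r\mid x)\,\nabla_x\log p_{R|t}(r\mid x)\,dr$; on the other hand, pushing the $x$-derivative through the random variable, $\mathbb{E}[\nabla_x R\mid x_t=x]$ should match the analogous expression with $\nabla_r\log p_{R|t}$ appearing (since a local shift of the conditioning variable $x$ acts on $R$ like $-$its shift in $r$, up to the shape-distortion captured by the deviation from a pure location family). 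The key identity to make precise is that the discrepancy is governed by $\int r\, p_{R|t}(r\mid x)\,[\nabla_x\log p_{R|t}(r\mid x) + \nabla_r\log p_{R|t}(r\mid x)]\,dr$ plus a boundary/integration-by-parts term; bounding the bracket uniformly in $r$ by $\sup_r\|\nabla_x\log p_{R|t} + \nabla_r\log p_{R|t}\|$ and controlling $\int |r|\,p_{R|t}\,dr$ (and the $\int p_{R|t}$ mass) yields the factor $(1+\tfrac1\alpha)$, where the $\tfrac1\alpha$ arises because $R = r(x_0)/\alpha$ scales the reward magnitude. This gives the $L_{t,3}(x)$ bound.

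The main obstacle is the second piece: carefully justifying the exchange of $\nabla_x$ and the conditional-expectation integral, and cleanly identifying how a perturbation of the conditioning variable $x$ translates into a perturbation ``inside'' $R$ versus a reweighting of the conditional law — this is precisely where the location-family structure enters, and getting the integration-by-parts bookkeeping right (including why $\nabla_x\log p_{R|t} + \nabla_r\log p_{R|t}$ is the correct ``shape-distortion'' quantity and why it vanishes for a pure shift family) is the delicate part. The regularity conditions needed to differentiate under the integral sign (dominated convergence, decay of $p_{R|t}$ at infinity) would need to be assumed or verified. The first piece is comparatively routine once the decomposition $R = \tilde r + \eta$ and the covariance/Cauchy–Schwarz argument are set up.
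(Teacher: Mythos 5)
Your high-level plan (a covariance/Cauchy--Schwarz bound for the tilting part, plus a ``deviation from a shift family'' bound obtained by integration by parts) is the same as the paper's, but the place where you cut the difference into two pieces is wrong, and this is a genuine gap rather than a missing regularity check. The conditional expectation $\mathbb{E}[\exp(\eta(\omega,x_t,t))\mid x_t^{\mathrm{pre}}=x]$ depends on $x$ through \emph{two} channels: the explicit argument of $\eta$ and the conditional law of the randomness $\omega$ given $x_t^{\mathrm{pre}}=x$. Your step ``differentiate inside the expectation'' keeps only the pathwise term $\mathbb{E}[e^{\eta}\nabla_x\eta\mid x]/\mathbb{E}[e^{\eta}\mid x]$ and silently drops the likelihood-ratio term coming from $\nabla_x$ of the conditional density; likewise your claim $\mathbb{E}[\nabla_x\eta\mid x_t=x]=\nabla_x\mathbb{E}[\eta\mid x_t=x]=0$ is exactly the interchange whose error the lemma is quantifying, and it is not available for free (the paper never uses it: in its term $I_1$ the deterministic part $\nabla_x\tilde r$ cancels simply because $\mathbb{E}[e^{R}/F(x)-1\mid x]=0$). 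As a result, with your intermediate point $\mathbb{E}[\nabla_x R\mid x_t=x]$, the true first piece is the paper's $I_1$ \emph{plus} the $\exp$-side reweighting term $\frac{1}{F(x)}\int e^{r}\nabla_x p_{R|t}(r\mid x)\,dr$, which is not a covariance in $\nabla_x\eta$ and is not bounded by $L_{t,1}L_{t,2}$; and your second piece is only the linear-side reweighting term $-\int r\,\nabla_x p_{R|t}(r\mid x)\,dr$, which is not bounded by $L_{t,3}$ either. The reason is the cancellation you cannot see from one side alone: writing $\nabla_x\log p_{R|t}=\Delta_t-\nabla_r\log p_{R|t}$ with $\Delta_t=\nabla_x\log p_{R|t}+\nabla_r\log p_{R|t}$ and integrating by parts (using the tail-decay assumptions), each reweighting term produces a constant $-1$ ($I_F=I_g=-1$ in the paper), and these constants cancel only when the $\exp$-side and linear-side reweighting terms are grouped together, as in the paper's $I_2$. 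Kept separately, each of your pieces retains an $O(1)$ remainder that does not vanish even for a pure shift family, so the claimed per-piece bounds $L_{t,1}L_{t,2}$ and $L_{t,3}$ are both false in general.

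The fix is essentially the paper's grouping: split $\nabla_x\log F(x)-\nabla_x g(x)$ into (i) the difference between the exponentially tilted and untilted conditional averages of the pathwise gradient $\nabla_{x_t}R$, where $\nabla_x\tilde r$ drops out because the tilt has conditional mean one, and Cauchy--Schwarz gives $L_{t,1}(x)L_{t,2}(x)$ (your coefficient-of-variation observation for $e^{\eta}$ versus $e^{R}$ is fine and matches $L_{t,2}$); and (ii) the difference of the two density-reweighting integrals, where the substitution $\nabla_x\log p_{R|t}=\Delta_t-\nabla_r\log p_{R|t}$, integration by parts, and the cancellation $I_F=I_g=-1$ leave $\mathbb{E}_{p'}[\Delta_t\mid x]-\mathbb{E}[R\,\Delta_t\mid x]$, bounded by $(1+\sup|R|)\sup_r\lVert\Delta_t\rVert=L_{t,3}(x)$ using the boundedness of $R=r(x_0)/\alpha$. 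Note also that the ``$1$'' in the factor $(1+\tfrac{1}{\alpha})$ comes from the tilted ($\exp$-side) expectation of $\Delta_t$, i.e., precisely from the term your split misplaces, not from the mass of $p_{R|t}$ as you suggest.
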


\begin{proof}
    To shorten the notation, we denote random variable $R:=r(x_0^{\mathrm{pre}})/\alpha$ and constant $C_R=\max_x \lvert r(x)/\alpha\rvert =\frac{1}{\alpha}$. Let $p_{R|t}$ be the conditional probability of $R$ given $x_t$.
    
    We first make one assumption about the boundary condition that the conditional probability density decreases exponentially fast $\lim_{r\to\pm\infty}r\cdot p_{R|t}(r|x)=0$, and $\lim_{r\to\pm\infty}e^{r}\cdot p_{R|t}(r|x)=0$. It is known that sub-Gaussian distributions satisfy those assumptions with an exponentially decreasing tail $ p(x)\leq Ce^{-|x|^a}$ for large enough $x$. Remember that in our experimental setting, all reward models output in a bounded range $[-2,2]$ and $\alpha$ is a fixed real number. Therefore, for the bounded random variable $R$, it belongs to the sub-Gaussian distribution and satisfies our boundary assumption.
    
    Let $F(x)=\mathbb{E}[\exp(R)|x_t^{\mathrm{pre}}=x]$ and $g(x)=\mathbb{E}[R|x_t^{\mathrm{pre}}=x]$, then
    \begin{equation*}
        \begin{aligned}
            &\nabla_x \log F(x) - \nabla_x g(x) 
            = \frac{\nabla_x F(x)}{F(x)} - \nabla_x g(x)\\
            =& \frac{\int \nabla_x\exp(r)\cdot p_{R|t}(r|x)dr+\int \exp(r)\nabla_x p_{R|t}(r|x)dr}{F(x)}\\
            &- \Big(\int \nabla_xr\cdot p_{R|t}(r|x)+r\nabla_xp_{R|t}(r|x) dr\Big)\\
            =& \frac{\mathbb{E}[\nabla_{x^{\mathrm{pre}}_t}\exp(R)|x^{\mathrm{pre}}_t=x]}{F(x)} - \mathbb{E}[\nabla_{x^{\mathrm{pre}}_t}R|x^{\mathrm{pre}}_t=x]\\
            &+\Big(\frac{\int \exp(r)\nabla_x p_{R|t}(r|x)dr}{F(x)} - \int r\nabla_xp_{R|t}(r|x) dr\Big)\\
            =&\underbrace{\mathbb{E}\left[\left(\frac{\exp(R)}{F(x)}-1\right)\nabla_{x^{\mathrm{pre}}_t}R\Big|x^{\mathrm{pre}}_t=x\right]}_{I_1}
            +\underbrace{\Big(\frac{\int \exp(r) \nabla_x p_{R|t}(r|x)dr}{F(x)} - \int r \nabla_x p_{R|t}(r|x) dr\Big)}_{I_2}.
        \end{aligned}
    \end{equation*}

    We first bound the first term $I_1$. Decompose $R$ into two parts $R=\Tilde{r}(x_{t}^{\mathrm{pre}}, t) + \eta(\omega, x_{t}^{\mathrm{pre}}, t)$ where the first part $\Tilde{r}(x_{t}^{\mathrm{pre}}, t)$ only depends on $x_{t}^{\mathrm{pre}}$. Note that $\mathbb{E}[\frac{\exp(R)}{F(x)}-1|x_{t}^{\mathrm{pre}}=x]=\mathbb{E}[\frac{\exp(R)}{\mathbb{E}[\exp(R)|x_{t}^{\mathrm{pre}}=x]}-1|x_{t}^{\mathrm{pre}}=x]=0$.
    
    \begin{equation*}
        \begin{aligned}
            \lVert I_1\rVert
            =&\Big\lVert\mathbb{E}\left[\left(\frac{e^R}{F(x)}-1\right)\nabla_{x^{\mathrm{pre}}_t}R\Big|x^{\mathrm{pre}}_t=x\right]\Big\rVert\\
            =&\Big\lVert\mathbb{E}\left[\left(\frac{e^R}{F(x)}-1\right)(\nabla_{x^{\mathrm{pre}}_t}\Tilde{r}(x^{\mathrm{pre}}_t, t))+\nabla_{x^{\mathrm{pre}}_t}\eta(\omega, x^{\mathrm{pre}}_t, t))\Big|x^{\mathrm{pre}}_t=x\right]\Big\rVert\\
            =&\Big\lVert\mathbb{E}\left[\left(\frac{e^R}{F(x)}-1\right)\nabla_{x^{\mathrm{pre}}_t}\eta(\omega, x^{\mathrm{pre}}_t, t)\Big|x^{\mathrm{pre}}_t=x\right]\Big\rVert\\
            \leq & \sqrt{\mathbb{E}\Big[\left(\frac{e^R}{F(x)}-1\right)^2\Big|x^{\mathrm{pre}}_t=x\Big]}\times\sqrt{\mathbb{E}\left[\lVert\nabla_{x^{\mathrm{pre}}_t}\eta(\omega, x^{\mathrm{pre}}_t, t)\rVert_2^2\Big|x^{\mathrm{pre}}_t=x\right]}\\
            =& L_{t,2}(x)\times L_{t,1}(x).
        \end{aligned}
    \end{equation*}

    For the second term $I_2$, we note that $I_2\equiv0$ under the assumption proposed by \cite{uehara2024fine} that $R=f(x_t)+\epsilon$ can be decomposed to the summation of a function related to $x_t$ and an independent noise $\epsilon$, which is induced by the translation invariance of $p_{R|t}(r|x)=p_{\text{noise}}(r-x)$. Inspired by this observation, we define $\Delta_t:=\nabla_x\log p_{R|t}(r|x)+\nabla_r\log p_{R|t}(r|x)$ to measure the shift from such a translation invariant family $\{p_{R|t}: \exists\ p_{noise},\ \mathrm{s.t.}\ p_{R|t}(r|x)=p_{\text{noise}}(r-x)\}$. 
    
    \begin{equation*}
        \begin{aligned}
            I_2=& \frac{\int \exp(r)p_{R|t}(r|x)\nabla_x\log p_{R|t}(r|x)dr}{F(x)} - \int r p_{R|t}(r|x)\nabla_x\log p_{R|t}(r|x) dr\\
            =&\frac{\int \exp(r)p_{R|t}(r|x)(\Delta_t-\nabla_r\log p_{R|t}(r|x))dr}{F(x)} -\int r p_{R|t}(r|x)(\Delta_t-\nabla_r\log p_{R|t}(r|x)) dr \\
            =& \mathbb{E}_{p'}[\Delta_t|x^{\mathrm{pre}}_t=x]-\mathbb{E}[r\Delta_t|x^{\mathrm{pre}}_t=x]\\
            & - \Big( \underbrace{\frac{\int \exp(r)p_{R|t}(r|x)\nabla_r\log p_{R|t}(r|x)dr}{F(x)}}_{I_F} - \underbrace{\int r p_{R|t}(r|x)\nabla_r\log p_{R|t}(r|x)dr}_{I_g}\Big),
        \end{aligned}
    \end{equation*}
    where  $p'(r|x)=\frac{\exp(r)p_{R|t}(r|x)}{F(x)}$ is the reweighted probability. Under the boundary condition, we can show that $I_F=I_g=-1$ as

    \begin{equation*}
        \begin{aligned}
            I_g =& \int r \nabla_rp_{R|t}(r|x)dr =\int \nabla_r(rp_{R|t}(r|x)) dr -\int p_{R|t}(r|x) dr\\
            =&rp_{R|t}(r|x)\Big|^{r=+\infty}_{r=-\infty} -1=-1,\\
            I_F =& \frac{\int \exp(r)\nabla_rp_{R|t}(r|x)dr}{F(x)} = \frac{\int \nabla_r\Big(\exp(r)p_{R|t}(r|x)\Big)dr-  \int\exp(r)p_{R|t}(r|x)dr}{F(x)} \\
            =& \frac{\int \nabla_r(\exp(r)p_{R|t}(r|x))dr}{F(x)} -1 = \frac{\Big(\exp(r)p_{R|t}(r|x)\Big)\Big|^{r=+\infty}_{r=-\infty}}{F(x)} -1=-1.
        \end{aligned}
    \end{equation*}

    Therefore, $\lVert I_2\rVert=\lVert\mathbb{E}_{p'}[\Delta_t|x^{\mathrm{pre}}_t=x]-\mathbb{E}[r\Delta_t|x^{\mathrm{pre}}_t=x]\rVert\leq(1+C_R)\sup_{r,x}\lVert\Delta_t\rVert=L_{t,3}(x)$.
\end{proof}

\subsection[Proof of Lemma 2]{Proof of \cref{lem:f-as-sumf}}

\begin{proof}[\textbf{Proof of Lemma \ref{lem:f-as-sumf}}]
    By leveraging the linearity of expectation available in $\bar{u}^{(r(w),\alpha)}$, we get
    \begin{align*}
         \bar{u}^{(r(w),\alpha)}(x, t) = \nabla_{x} \mathbb{E}_{x_{0} \sim p^{\mathrm{pre}}_{0|t}(\cdot|x)} \left[ \left(\frac{\sum^{m}_{i=1} w_{i} r_{i}(x_{0})}{\alpha} \right) \right] = \sum^{m}_{i=1} w_{i}  \bar{u}^{(r_{i},\alpha)}(x, t). 
    \end{align*}
    Using this, we get
\begin{align*}
      &f^{(r(w),\alpha)}(x_{t}, t)\\
      =& f^{\mathrm{pre}}(x_{t}, t) - \beta(t)u^{(r(w),\alpha)}(x_{t}, t) =   f^{\mathrm{pre}}(x_{t}, t) - \beta(t)       \bar{u}^{(r(w),\alpha)}(x, t) - \beta(t)  \Delta^{(r(w),\alpha)}(x, t) \\
      =&f^{\mathrm{pre}}(x_{t}, t) - \beta(t)  \sum^{m}_{i=1} w_{i}  u^{(r_{i},\alpha)}(x,t) + \beta(t)\left( \sum^{m}_{i=1} w_{i} \Delta^{(r_{i},\alpha)}(x, t) - \Delta^{(r(w),\alpha)}(x, t) \right) \\
       =&  \sum^{m}_{i=1} w_{i} (f^{\mathrm{pre}}(x_{t}, t) - \beta(t)   u^{(r_{i},\alpha)}(x,t) ) + \beta(t)\left(\sum^{m}_{i=1} w_{i} \Delta^{(r_{i},\alpha)}(x, t) -\Delta^{(r(w),\alpha)}(x, t)\right) \\
       = &\sum^{m}_{i=1} w_{i}  f^{(r_{i},\alpha)}(x_{t}, t) + \beta(t)\left(\sum^{m}_{i=1} w_{i} \Delta^{(r_{i},\alpha)}(x, t) -\Delta^{(r(w),\alpha)}(x, t)\right).
\end{align*}
Similarly, using the fact that $\bar{u}^{(r,\alpha(\lambda))}(x, t) = \lambda \bar{u}^{(r,\alpha)}(x, t)$, we get
\begin{align*}
     &f^{(r,\alpha(\lambda))}(x_{t}, t)\\
     =& f^{\mathrm{pre}}(x_{t}, t) - \beta(t) u^{(r,\alpha(\lambda))}(x_{t}, t) =  f^{\mathrm{pre}}(x_{t}, t) - \beta(t) \bar{u}^{(r,\alpha(\lambda))}(x, t) - \beta(t) \Delta^{(r,\alpha(\lambda))}(x, t) \\
     =& f^{\mathrm{pre}}(x_{t}, t) - \beta(t)\lambda \bar{u}^{(r,\alpha)}(x, t) - \beta(t) \Delta^{(r,\alpha(\lambda))}(x, t) \\
     =& \lambda ( f^{\mathrm{pre}}(x_{t}, t) - \beta(t) \bar{u}^{(r,\alpha)}(x, t) ) + (1 - \lambda)  f^{\mathrm{pre}}(x_{t}, t) - \beta(t) \Delta^{(r,\alpha(\lambda))}(x, t) \\
     =& \lambda  f^{(r,\alpha)}(x_{t}, t) + (1 - \lambda)  f^{\mathrm{pre}}(x_{t}, t) + \beta(t)\left( \lambda   \Delta^{(r,\alpha)}(x, t) - \Delta^{(r,\alpha(\lambda))}(x, t) \right).
\end{align*}
\end{proof}

\subsection[Proof of Proposition 2]{Proof of \cref{prop:db-mpa-ls}} \label{app:proof_prop2}

\begin{proof}[\textbf{Proof of Proposition \ref{prop:db-mpa-ls}}] Denote $b(x)=af_1(x)+(1-a)f_2(x)$.
For any function $\psi\in C^2(\mathbb{R}^d)$, Itô’s formula gives:
\begin{align*}
    d\psi(X_t^1) &= \sigma(t)\nabla \psi(X_t^1)^Td\omega_t + [\nabla \psi(X_t^1)^Tb(X_t^1)+\frac{\sigma^2(t)}{2}tr(\nabla^2\psi(X_t^1))]dt,\\
    \frac{d}{dt}\mathbb{E}[\psi(X_t^1)]&=\int\sigma(t)\nabla \psi(X_t^1)^Td\omega_t + \mathbb{E}[\nabla \psi(X_t^1)^Tb(X_t^1)+\frac{\sigma^2(t)}{2}tr(\nabla^2\psi(X_t^1))].
\end{align*}
Note that $tr(\nabla^2\psi)=\Delta \psi$, and $\int\sigma(t)\nabla \psi(X_t^1)^Td\omega_t$ is a martingale with mean $0$, therefore the first term in the RHS is $0$. We got for SDE 1:
$$ \frac{d}{dt}\mathbb{E}[\psi(X_t^1)] =  \mathbb{E}[\nabla \psi(X_t^1)^Tb(X_t^1)+\frac{\sigma^2(t)}{2}\Delta \psi(X_t^1)].$$

Move to the SDE 2. Similarly, we can derive:
\begin{align*}
    \frac{d}{dt}\mathbb{E}[\psi(X_t^2)] &= \mathbb{E}[\nabla \psi(X_t^2)^T(Y_tf_1(X_t^2)+(1-Y_t)f_2(X_t^2))+\frac{\sigma^2(t)}{2}\Delta \psi(X_t^2)]\\
    &=\mathbb{E}[Y_t \nabla \psi(X_t^2)^Tf_1(X_t^2]+\mathbb{E}[(1-Y_t)\nabla \psi(X_t^2)^Tf_2(X_t^2)]+\mathbb{E}[\frac{\sigma^2(t)}{2}\Delta \psi(X_t^2)]\\
    &=\mathbb{E}[Y_t]\mathbb{E}[\nabla \psi(X_t^2)^Tf_1(X_t^2)]+\mathbb{E}[1-Y_t]\mathbb{E}[\nabla \psi(X_t^2)^Tf_2(X_t^2)]+\mathbb{E}[\frac{\sigma^2(t)}{2}\Delta \psi(X_t^2)]\\
    &=a\mathbb{E}[\nabla \psi(X_t^2)^Tf_1(X_t^2)]+(1-a)\mathbb{E}[\nabla \psi(X_t^2)^Tf_2(X_t^2)]+\mathbb{E}[\frac{\sigma^2(t)}{2}\Delta \psi(X_t^2)]\\
    &=\mathbb{E}[\nabla \psi(X_t^2)b(X_t^2)+\frac{\sigma^2(t)}{2}\Delta \psi(X_t^2)],
\end{align*}
where the independence of $\{Y_t\}$ is applied.

Both $\{X_t^1\}$ and $\{X_t^2\}$ satisfy: $\frac{d}{dt}\mathbb{E}[\psi(X_t)]=\mathbb{E}[\nabla \psi(X_t)b(X_t)+\frac{\sigma^2(t)}{2}\Delta \psi(X_t)] $ with $X_0\sim p_0$, for any $\psi\in C^2$. Denote $p_t$ as the probability density of $X_t$, we got:
\begin{align*}
    \frac{d}{dt}\int \psi(s) p_t(x) dx &= \int \left(\nabla \psi(x)b(x)+\frac{\sigma^2(t)}{2}\Delta \psi(x)\right)p_t(x) dx\\
    \int \psi(s) \partial_tp_t(x) dx&= \int -\psi(x)\nabla\Big(b(x) p_t(x)\Big)+\frac{\sigma^2(t)}{2}\psi(x)\Delta p_t(x) dx,\\
\end{align*}
where the change of integral is used. Then we have:
$$\int \psi(s) \Big(\partial_tp_t(x) dx + \nabla(b(x) p_t(x))-\frac{\sigma^2(t)}{2}\Delta p_t(x)\Big) dx=0,\ \forall \psi\in C^2. $$

Therefore, both $p_{X_t^1}$ and $p_{X_t^2}$ are solutions of this PDE: $\partial_tp_t(x) dx + \nabla(b(x) p_t(x))-\frac{\sigma^2(t)}{2}\Delta p_t(x)=0$ with $p|_{t=0}=p_0$. Since the drift term $b$ is Lipschitz continuous, this PDE has only one solution. Therefore, $p_{X_t^1}=p_{X_t^2}$.
\end{proof}

\section{Experimental Details}
\label{app:experiments-detail}

\subsection{Baseline Algorithms} 
\label{baseline}
\textbf{MORL:} For MORL, we fine-tune the SDv1.5 base model using RL,  following the same procedure as in \cite{fan2023dpok}. In particular, we obtain separately fine-tuned models for $(r(w), \alpha(\lambda))$, for different values of $w$ and $\lambda$. This is used as an oracle baseline for both DB-MPA and DB-KLA. 

\textbf{Rewarded Soup (RS):} We use RS, introduced by \cite{rame2023rewarded}, as a baseline for the DB-MPA. We first RL fine-tune SDv1.5 separately for each reward  each reward $(r_{i})^{m}_{i=1}$   with a fixed $\alpha$. So, starting from the pre-trained model parameter $\theta^{\mathrm{pre}}$, we obtained $m$ RL fine-tuned models with parameters $(\theta^{\mathrm{rl}}_{i})^{m}_{i=1}$. At inference time, given the preference $w$ given by the user, we construct a new model with parameter $\theta^{\mathrm{rs}}(w) = \sum^{m}_{i=1} w_{i} \theta^{\mathrm{rl}}_{i}$, and generate images using this model. We only average the U-Net parameters from the fine-tuned models.

\textbf{Reward Gradient Guidance (RGG):}  We follow the gradient guidance approach \citep{chung2023dps, kim2025test} where the diffusion process at each backward step is updated using the gradient of the reward function. The update rule is given by:
\begin{equation}
\mu_\theta(x_t, t) + \frac{\lambda_{t-1} \sigma_t^2 }{ \alpha} \nabla_{x_t} \hat{r}(x_t),
\end{equation}
where $\mu_\theta(x_t, t)$ denotes the base model’s predicted mean, $\sigma_t^2$ is the noise variance at timestep $t$, $\alpha$ is the KL regularization weight, and $\lambda_{t-1}$ is a time-dependent scaling factor defined by the exponential schedule $\lambda_t = (1 + \gamma)^{t - 1}$ with $\gamma = 0.024$, as introduced in \cite{kim2025test}. The reward function is defined as $\hat{r}(x_t) = r(x_0(x_t))$, where $x_0(x_t)$ denotes the Tweedie approximation-based reconstructed image obtained from $x_t$ using the denoiser network.

We adapted this approach to the multi-objective setting by combining gradients from two reward functions $r_1$ and $r_2$ as,
\begin{equation}
\nabla_{x_t} \hat{r}(x_t)  =  w_1 \nabla_{x_t} r_1(x_0(x_t)) + (1 - w_1) \nabla_{x_t} r_2(x_0(x_t)).
\end{equation}
However, to ensure that the influence of each reward is independent of its scale or gradient magnitude, we normalized the individual gradients before combining them,
\begin{equation}
\nabla_{x_t} \hat{r}(x_t) =  w_1 \frac{\nabla_{x_t} r_1(x_0(x_t))}{\|\nabla_{x_t} r_1(x_0(x_t))\|} + (1 - w_1) \frac{\nabla_{x_t} r_2(x_0(x_t))}{\|\nabla_{x_t} r_2(x_0(x_t))\|}.
\end{equation}
This normalization ensures that the reward guidance strength is controllable and not biased by the nature or scale of individual rewards.

\textbf{CoDe:} Introduced by \cite{singh2025code},  CoDe is a gradient-free  guidance method for aligning diffusion models with downstream reward functions. CoDe operates by partitioning the denoising process into blocks and, at each block, generating multiple candidate samples. It then selects the sample with the highest estimated lookahead reward to proceed with the next denoising steps. For our experiments, we configured CoDe with 20 particles and a lookahead of 5 steps. 

\subsection{Pseudo Code}\label{app:pseudo}
In this section, we present the pseudo-code for the three inference-time algorithms introduced in \cref{sec:alg}: DB-MPA, DB-KL, DB-MPA-LS. All algorithms leverage the approximation result from \cref{lem:f-as-sumf} or \cref{prop:db-mpa-ls} to enable controllable sampling without requiring additional fine-tuning at inference time.

\Cref{alg:mpa} outlines the DB-MPA procedure, in which a user-specified preference vector $w$ is used to linearly combine the drift functions of $m$ RL fine-tuned models, each optimized independently for a distinct reward basis. \Cref{alg:KL} presents the DB-KLA procedure, which instead blends the drift of a fine-tuned model with the pre-trained model. \Cref{alg:mpa_ls} approximates \Cref{alg:mpa} by sampling a drift with the assigned preference weights. All fine-tuned models used in algorithms are obtained by applying the single-reward RL fine-tuning algorithm individually to each reward function.

Note that all algorithms adopt a basic Euler-Maruyama discretization to simulate the reverse SDE in \cref{eq:backward-diffusion-after-ft}. In practice, this integration step can be replaced by any diffusion model’s reverse process (e.g., DDIM \citep{song2020denoising} or other solvers), as long as the drift term (or the predicted denoising output) is appropriately mixed.

\begin{algorithm}
\caption{\textbf{DB-MPA}}
\label{alg:mpa}
\begin{algorithmic}[1]
\Statex \hspace{-\algorithmicindent} \textbf{Input:} RL‑fine‑tuned drifts $\{\,f^{(r_i,\alpha)}\}_{i=1}^{m}$; 
weights $w\!\in\!\mathbb R^{m},\ \sum_{i=1}^{m} w_i=1$; 
time grid $0=t_0<t_1<\dots<t_N=T$
\State Sample $x_{t_N}\sim\mathcal N(0,I)$
\For{$k \gets N$ \textbf{down to} $1$}
    \State $\Delta t_k \gets t_k - t_{k-1}$ \Comment{positive}
    \State noise $z \sim \mathcal N(0,I)$
    \State $f_{\text{mix}} \gets \sum_{i=1}^{m} w_i\,f^{(r_i,\alpha)}(x_{t_k},\,t_k)$
    \State $x_{t_{k-1}} \gets x_{t_k}
            \;-\;f_{\text{mix}}\ \Delta t_k
            \;+\;\sigma(t_k)\sqrt{\Delta t_k}\;z$
\EndFor
\Statex \hspace{-\algorithmicindent} \textbf{Output:} $x_{t_0}$
\end{algorithmic}
\end{algorithm}

\begin{algorithm}
\caption{\textbf{DB-KLA}}
\label{alg:KL}
\begin{algorithmic}[1]
\Statex \hspace{-\algorithmicindent} \textbf{Input:} RL-fine-tuned drift $f^{(r,\alpha)}$; original pretrained drift $ f^{pre}$;
KL-reweight parameter $\lambda\geq0$;\ 
time grid $0 = t_0 < t_1 < \dots < t_N = T$
\State Sample $x_{t_N} \sim \mathcal{N}(0,I)$
\For{$k \gets N$ \textbf{down to} $1$}
    \State $\Delta t_k \gets t_k - t_{k-1}$ \Comment{positive}
    \State noise $z \sim \mathcal{N}(0,I)$
    \State $f_{\text{KL}} \gets \lambda f^{(r,\alpha)}_{t_k}(x_{t_k},t_k) + (1-\lambda) f_{t_k}^{pre}(x_{t_k},t_k)$
    \State $x_{t_{k-1}} \gets x_{t_k} 
        - f_{\text{KL}}\, \Delta t_k 
        + \sigma(t_k)\sqrt{\Delta t_k}\, z$
\EndFor
\Statex \hspace{-\algorithmicindent} \textbf{Output:} $x_{t_0}$
\end{algorithmic}
\end{algorithm}

\begin{algorithm}
\caption{\textbf{DB-MPA-LS}}
\label{alg:mpa_ls}
\begin{algorithmic}[1]
\Statex \hspace{-\algorithmicindent} \textbf{Input:} RL‑fine‑tuned drifts $\{\,f^{(r_i,\alpha)}\}_{i=1}^{m}$; 
weights $w\!\in\!\mathbb R^{m},\ \sum_{i=1}^{m} w_i=1$; 
time grid $0=t_0<t_1<\dots<t_N=T$
\State Sample $x_{t_N}\sim\mathcal N(0,I)$
\For{$k \gets N$ \textbf{down to} $1$}
    \State $\Delta t_k \gets t_k - t_{k-1}$  \Comment{positive}
    \State noise $z \sim \mathcal N(0,I)$
    \State sample $i_k$ from $\{1,2,\hdots, m\}$ with probability $(w_1,w_2,\hdots,w_m)$
    \State $x_{t_{k-1}} \gets x_{t_k}
            \;-\;f^{(r_{i_k},\alpha)}(x_{t_k},\,t_k)\ \Delta t_k
            \;+\;\sigma(t_k)\sqrt{\Delta t_k}\;z$
\EndFor
\Statex \hspace{-\algorithmicindent} \textbf{Output:} $x_{t_0}$
\end{algorithmic}
\end{algorithm}

\subsection{Prompt sets}
\label{appendix:prompt}

We evaluate our methods on two datasets: \textit{Short-DrawBench} and \textit{GenEval}.  

\textbf{Short-DrawBench.} The original DrawBench dataset \citep{saharia2022photorealistic} contains 183 prompts across 11 categories. For our experiments, we isolate the 25 prompts in the \texttt{color} category, forming the \textit{Short-DrawBench} subset. This reduced scale enables direct comparison with MORL, since training separate models for each preference or KL weight is computationally feasible. To test generalization, we further construct a set of 1,000 evaluation prompts generated by GPT-4 \citep{achiam2023gpt}. These prompts introduce novel object–color and multi-object compositions not present in the training subset. The instruction used for generating this test set was:  

\textit{``Please generate 1000 testing prompts that are similar to the following training prompts, which are color+object combinations. You should use colors that appeared in the train set or have a similar semantic meaning. Objects can be a little more common or random.''}  

\textbf{GenEval.} The GenEval benchmark \citep{kirstain2023pick} consists of 550 prompts designed to test compositional generalization, spanning attributes such as color, counting, spatial relations, and multi-object scenes. In addition to the official prompts, we generate an extra 700 held-out evaluation prompts using the official GenEval prompt generation toolkit. These follow the same construction rules as the original dataset, obtained by varying random seeds and object/color assignments.

\subsection*{B.4 Computing Hardware and Hyperparameters}\label{app:hyper}
\vspace{-0.1cm}
Fine-tuning of Stable Diffusion for each KL weight and reward composition was performed on NVIDIA A100 GPUs using mixed precision. We used the AdamW optimizer with a learning rate of $1\times10^{-5}$ for policy updates and applied LoRA with rank 4. Gradient accumulation was set to 12, with a per-GPU batch size of 2 for policy updates and 6 for prompt sampling.

Policy updates followed a clipped PPO-style objective with a clipping ratio of $1\times10^{-4}$ following \cite{fan2023dpok}. Each outer iteration performed 5 policy gradient steps and 5 value function updates (batch size 256, learning rate $1\times10^{-4}$), using a replay buffer of size 1000.Training required approximately 96{,}000 online samples to converge for the \textit{Short-DrawBench} subset. A similar size of online samples is used for GenEval.

\section{DB-MPA Algorithm: Additional Results}
\label{app:db-mpa-results}
\vspace{-0.3cm}

In this section, we present additional experimental results for the DB-MPA algorithm. We begin with reward evaluations on the training prompts. Next, we demonstrate that DB-MPA naturally extends to alignment with three rewards ($m=3$). We also show that DB-MPA-LS produces outputs visually close to DB-MPA. We then provide additional qualitative comparisons with baselines to further highlight DB-MPA’s effectiveness. Finally, we evaluate DB-MPA’s ability to achieve fine-grained multi-preference alignment.

\subsection{Results on Training Prompts}

For the \textit{Short-DrawBench} setting, Table~\ref{tab:grouped_results_train} reports the performance of DB-MPA and baseline algorithms relative to Stable Diffusion, evaluated on the training prompts (30 random seed per prompt, 750 images in total). Across all preference weights, DB-MPA consistently outperforms RS, CoDe, and RGG.

We next consider the \textit{GenEval} dataset, which evaluates compositional generalization. Table~\ref{tab:geneval_train_rel} reports the reward improvements of all methods relative to Stable Diffusion. DB-MPA and DB-MPA-LS achieve the best or near-best gains across most preference weights. Although the table lists separate $\Delta r_1$ and $\Delta r_2$, we also computed the weighted reward $\Delta \mathrm{WR}=w r_1+(1-w)r_2$. On this aggregate metric, DB-MPA and DB-MPA-LS consistently outperform all other baselines, confirming their superior trade-off performance.

\begin{table}[h!]
\centering
\caption{\small Quantitative comparison of DB-MPA and baseline methods on train prompts. Here $\Delta r_i=r_i-r_i^{\text{SD}}$}
{\small
\begin{tabular}{ll
cc cc cc cc cc cc
}
\toprule
\multicolumn{2}{c}{\textbf{}} 
& \multicolumn{2}{c}{\textbf{DB-MPA}} 
& \multicolumn{2}{c}{\textbf{RS}} 
& \multicolumn{2}{c}{\textbf{CoDe}} 
& \multicolumn{2}{c}{\textbf{RGG}} \\
\cmidrule(lr){3-4} \cmidrule(lr){5-6} \cmidrule(lr){7-8}
\cmidrule(lr){9-10} \cmidrule(lr){11-12} \cmidrule(lr){13-14}
& & $\Delta r_1$ & $\Delta r_2$ & $\Delta r_1$ & $\Delta r_2$ & $\Delta r_1$ & $\Delta r_2$ & $\Delta r_1$ & $\Delta r_2$ \\
\midrule
\multirow{3}{*}{Improvement ($\uparrow$)} 
& $w{=}0.2$ 
& \textbf{0.19} & \textbf{0.74} 
& {-0.01} & {0.61} 
& {0.14} & {0.23} 
& {0.12} & {0.59} \\
& $w{=}0.5$ 
& \textbf{0.49} & \textbf{0.50} 
& { 0.12} & {0.20} 
& {0.29} & {0.19} 
& {0.13} & {0.39} \\
& $w{=}0.8$ 
& \textbf{0.65} & \textbf{0.18} 
& {0.54} & {0.02}
& {0.34} & {0.12}
& {0.03} & {0.16} \\
\midrule
\bottomrule
\end{tabular}
}
\vspace{1ex}
\label{tab:grouped_results_train}
% \end{wraptable}
\end{table}

\begin{table}[h!]
\centering
\caption{GenEval train results: reward improvements ($\Delta r$) relative to Stable Diffusion.}
{\scriptsize
\begin{tabular}{ll
cc cc cc cc cc
}
\toprule
\multicolumn{2}{c}{\textbf{}} 
& \multicolumn{2}{c}{\textbf{DB-MPA}} 
& \multicolumn{2}{c}{\textbf{DB-MPA-LS}}  
& \multicolumn{2}{c}{\textbf{RS}} 
& \multicolumn{2}{c}{\textbf{CoDe}}
& \multicolumn{2}{c}{\textbf{RGG}}\\
\cmidrule(lr){3-4} \cmidrule(lr){5-6} \cmidrule(lr){7-8}
\cmidrule(lr){9-10} \cmidrule(lr){11-12} 
& & $\Delta r_1$ & $\Delta r_2$ & $\Delta r_1$ & $\Delta r_2$ & $\Delta r_1$ & $\Delta r_2$ & $\Delta r_1$ & $\Delta r_2$ &  $\Delta r_1$ & $\Delta r_2$ \\
\midrule
\multirow{5}{*}{Reward ($\uparrow$)} 
& $w{=}0.2$ 
& +0.122 & \textbf{+0.497} & \underline{+0.143} & +0.477 & -0.021 & +0.378 & \textbf{+0.254} & +0.107 & +0.064 & \underline{+0.497} \\
& $w{=}0.5$ 
& +0.267 & \textbf{+0.357} & \textbf{+0.376} & \underline{+0.310} & +0.161 & +0.192 & \underline{+0.344} & +0.067 & +0.104 & +0.327 \\
& $w{=}0.8$ 
& \underline{+0.382} & \textbf{+0.185} & \textbf{+0.411} & \underline{+0.180} & +0.340 & +0.089 & +0.374 & +0.007 & -0.246 & +0.111 \\
\bottomrule
\end{tabular}
}
\vspace{1ex}
\label{tab:geneval_train_rel}
\end{table}

\subsection{Quantitative Results on GenEval Test Data}
\label{app:geneval_test}

Table~\ref{tab:geneval_test} reports the raw numerical values corresponding to \cref{fig:db_mpa_ls} shown in the main paper.  

\begin{table}[h!]
\centering
\caption{\small GenEval (test): numerical results ($r_1$, $r_2$) corresponding to the Pareto-front plot \cref{fig:db_mpa_ls} in the main text. DB-MPA and DB-MPA-LS consistently dominate baselines across preference weights.}
{\scriptsize
\begin{tabular}{ll
cc cc cc cc cc cc
}
\toprule
\multicolumn{2}{c}{} 
& \multicolumn{2}{c}{\textbf{SD}} 
& \multicolumn{2}{c}{\textbf{DB-MPA}} 
& \multicolumn{2}{c}{\textbf{DB-MPA-LS}}  
& \multicolumn{2}{c}{\textbf{RS}} 
& \multicolumn{2}{c}{\textbf{CoDe}}
& \multicolumn{2}{c}{\textbf{RGG}}\\
\cmidrule(lr){3-4} \cmidrule(lr){5-6} \cmidrule(lr){7-8}
\cmidrule(lr){9-10} \cmidrule(lr){11-12} \cmidrule(lr){13-14} 
& & $r_1$ & $r_2$ & $r_1$ & $r_2$ & $r_1$ & $r_2$ & $r_1$ & $r_2$ & $r_1$ & $r_2$ &  $r_1$ & $r_2$ \\
\midrule
\multirow{5}{*}{Reward ($\uparrow$)} 
& $w{=}0$ 
& -0.2 &  -0.04 
& -0.19 & 0.83 
& -0.19 & 0.83
& -0.19 & 0.83 
& -0.03 & 0.07 
& 0.03 & 0.52 \\
& $w{=}0.2$ 
& -0.2 &  -0.04 
& -0.05 & 0.73
& -0.07 & 0.69 
& -0.21 & 0.54 
& -0.07 & 0.05 
& 0.02 & 0.44 \\
& $w{=}0.5$ 
& -0.2 &  -0.04  
& 0.13 & 0.47
& 0.19 & 0.37
& -0.09 & 0.21 
&  0.05 & 0.01 
& -0.35 & 0.27 \\
& $w{=}0.8$ 
& -0.2 &  -0.04 
& 0.25 & 0.18 
& 0.25 & 0.13
& 0.12 &  0.07 
&  0.12 & -0.05 
& -0.41 & 0.10 \\
& $w{=}1$ 
& -0.2 &  -0.04 
&  0.26 & 0.01 
& 0.26 & 0.01 
& 0.26 & 0.01 
&  0.12 & -0.09 
& -0.20 & -0.06 \\
\bottomrule
\end{tabular}
}
\vspace{1ex}
\label{tab:geneval_test}
\end{table}

\subsection{Results for Conflicting Rewards}\label{app:jpeg_vila}
To evaluate DB-MPA under adversarial objectives, we consider the conflict between \textit{JPEG compressibility} and \textit{VILA aesthetics}. The JPEG reward incentivizes smooth, low-detail images, whereas VILA prioritizes fine-grained, high-quality visuals. These objectives are naturally at odds: optimizing for JPEG typically harms aesthetics. For example, while the Stable Diffusion (SD) baseline scores $r_1=-0.09$ on JPEG and $r_2=0.48$ on VILA, an RL-fine-tuned JPEG model attains $r_1=1.52$ but drops to $r_2=-0.40$.  

We train a JPEG-aligned model ($r_1$) and combine it with our VILA-aligned model ($r_2$) using DB-MPA. Because JPEG compressibility is non-differentiable, gradient-based methods such as RGG cannot be applied. We therefore compare DB-MPA against Rewarded Soup (RS) and CoDe. Rewards are reported as a function of the blending weight $w \in [0,1]$, with $w=1$ preferring JPEG and $w=0$ preferring VILA.  

Across all weights, DB-MPA achieves substantially higher weighted rewards than both RS and CoDe. This demonstrates that DB-MPA can effectively balance two strongly conflicting objectives far better than competing baselines.

\begin{table}[h!]
\centering
\caption{Performance on Short-drawbench 1k test prompts under conflicting rewards: JPEG compressibility ($r_1$) and VILA aesthetics ($r_2$). We also report the weighted reward $\mathrm{WR}=w r_1 + (1-w) r_2$. The best weighted reward is bold.}
{\small
\begin{tabular}{ll
ccc ccc ccc
}
\toprule
\multicolumn{2}{c}{} 
& \multicolumn{3}{c}{\textbf{DB-MPA}} 
& \multicolumn{3}{c}{\textbf{RS}}  
& \multicolumn{3}{c}{\textbf{CoDe}} \\
\cmidrule(lr){3-5} \cmidrule(lr){6-8} \cmidrule(lr){9-11}
& & $r_1$ & $r_2$ & WR 
  & $r_1$ & $r_2$ & WR 
  & $r_1$ & $r_2$ & WR \\
\midrule
\multirow{3}{*}{Reward ($\uparrow$)} 
& $w{=}0.2$ 
& 0.52 & 0.40 & \textbf{0.44} 
& 0.11 & 0.28 & 0.21 
& 0.04 & 0.02 & 0.03 \\
& $w{=}0.5$ 
& 1.00 & 0.18 & \textbf{0.59} 
& 0.30 & 0.02 & 0.16 
& 0.22 & 0.03 & 0.12 \\
& $w{=}0.8$ 
& 1.35 & -0.18 & \textbf{0.88} 
& 0.93 & -0.13 & 0.72 
& 0.37 & -0.01 & 0.29 \\
\bottomrule
\end{tabular}
}
\vspace{1ex}
\label{tab:conflicting_rewards}
\end{table}

\subsection{Effect of Increasing the Number of Rewards}\label{app:4_rewards}
\label{app:number_reward}
% {\color{blue}
We study how the performance of DB changes as the number of reward models increases. In Figure~\cref{fig:four-reward}, we evaluate DB under 2-, 3-, and 4-reward settings and compare DB-MPA, DB-MPA-LS, and RS, all of which interpolate the same set of finetuned reward-basis models. It can be observed that the improvements of DB-MPA and DB-MPA-LS over the pretrained model remain stable as more rewards are introduced. All experiments use uniform average weights. From \cref{tab:multi_reward_scaling}, we observe that DB-MPA achieves the largest improvement, while DB-MPA-LS performs slightly worse but reduces inference cost to essentially the same unit-time speed as SD v1.5. In contrast, the baseline RS, despite interpolating the same 2–4 reward-basis models, performs substantially worse than DB, and its performance degrades noticeably as the number of rewards increases.
% }

\begin{figure}[htbp]
    \centering
    \includegraphics[width=\textwidth]{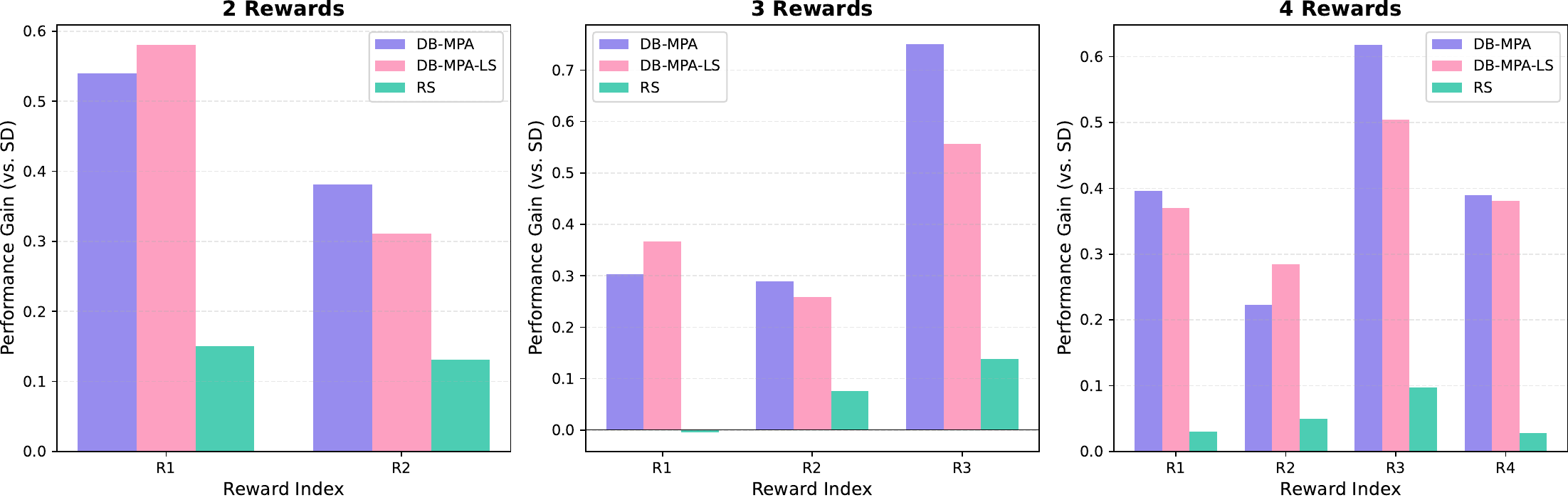}
    \caption{\small
    Performance comparison on Short-drawbench 1k test prompts of DB-MPA and baseline algorithms under different numbers of reward models. R1 = ImageReward, R2 = VILA, R3 = Compressibility, R4 = PickScore. Performance improvement is computed as (algorithm reward) - (SD-v1.5 reward). DB-MPA and DV-MPA-LS consistently outperform RS as the number of rewards increases.}
    \label{fig:four-reward}
\end{figure}

\begin{table}[h!]
\centering
\caption{\small Performance improvements under 2, 3, and 4 reward settings. Each column shows $\Delta r_i = r_i^{\text{method}} - r_i^{\text{SDv1.5}}$, and the group-wise mean.}
{\scriptsize
% {\color{blue}
\begin{tabular}{l
ccc c
cccc c
ccccc
}
\toprule
& \multicolumn{3}{c}{\textbf{2-Reward}} & \multicolumn{4}{c}{\textbf{3-Reward}} & \multicolumn{5}{c}{\textbf{4-Reward}} \\
\cmidrule(lr){2-4}
\cmidrule(lr){5-8}
\cmidrule(lr){9-13}
\textbf{Method}
& $\Delta r_1$ & $\Delta r_2$ & Avg
& $\Delta r_1$ & $\Delta r_2$ & $\Delta r_3$ & Avg
& $\Delta r_1$ & $\Delta r_2$ & $\Delta r_3$ & $\Delta r_4$ & Avg \\
\midrule
DB 
& 0.54 & \textbf{0.38} & \textbf{0.46}
& 0.30 & \textbf{0.29} & \textbf{0.75} & \textbf{0.45}
& \textbf{0.40} & 0.22 & \textbf{0.62} & \textbf{0.39} & \textbf{0.41} \\
LS
& \textbf{0.58} & 0.31 & 0.45
& \textbf{0.37} & 0.26 & 0.56 & 0.39
& 0.37 & \textbf{0.28} & 0.50 & 0.38 & 0.38 \\
RS
& 0.15 & 0.13 & 0.14
& -0.00 & 0.08 & 0.14 & 0.07
& 0.03 & 0.05 & 0.10 & 0.03 & 0.05 \\
\bottomrule
\end{tabular}
}
% }
\vspace{1ex}
\label{tab:multi_reward_scaling}
\end{table}

\newpage

\subsection{Results for Three-Reward Setting}\label{app:3_rewards}

To evaluate DB-MPA in a more complex multi-objective setting, we conducted experiments using three distinct reward functions: ImageReward for text-image alignment, VILA for aesthetic quality, and PickScore~\cite{kirstain2023pick} as a proxy for human preference w. We evaluate the result on a smaller test set with 25  prompts generated by GPT-4 \citep{achiam2023gpt} with identical $30$ fixed random seeds for each prompt. The full list of this smaller test set is in our code. Figure~\ref{fig:three-reward} illustrates DB-MPA performance across various weight combinations in the three-reward setting. DB-MPA consistently adapts its outputs to reflect user-specified preferences, demonstrating scalable control without requiring additional retraining.

\begin{figure}[htbp]
    \centering
    \includegraphics[width=0.95\textwidth]{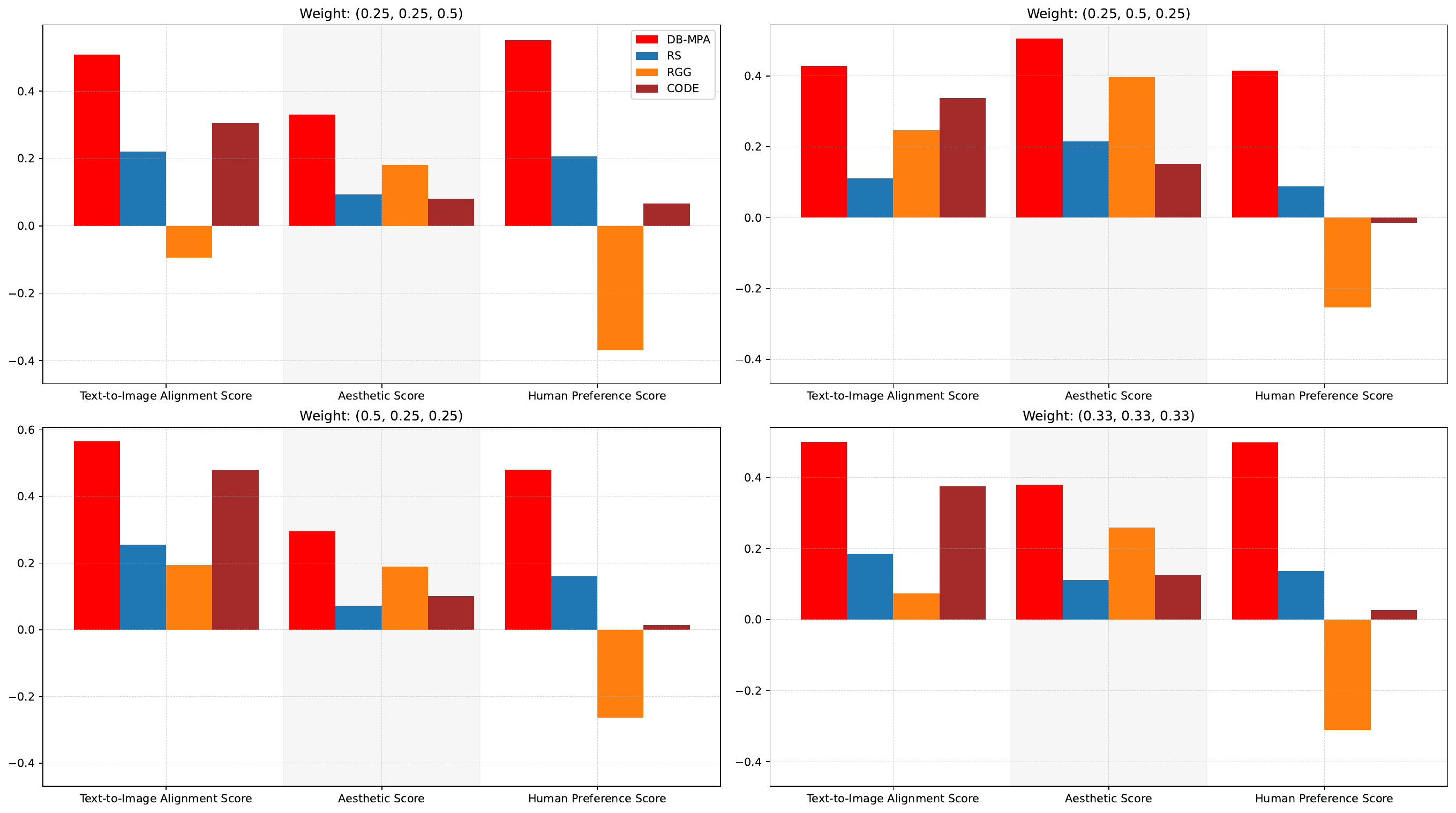}
    \caption{\small Performance comparison of DB-MPA and baseline algorithms in the $m=3$ rewards setting, using ImageReward (text-image alignment), VILA (aesthetic quality), and PickScore (human preference). Each bar shows the improvement over SDv1.5 for the corresponding reward. DB-MPA consistently outperforms all baselines across different weight combinations and all reward dimensions.}
    \label{fig:three-reward}
\end{figure}

% \newpage

\subsection{Extension to SDXL}
\label{app:sdxl_extension}
% {\color{blue}
To examine whether the preference trade-off behavior observed in SD 1.5 carries over to a substantially larger backbone, we extend our experiments to Stable Diffusion XL (SDXL). The SDXL base UNet contains 1.6B parameters, more than 5× larger than the 300M UNet in SD 1.5, and the full SDXL pipeline totals approximately 2.6B parameters. Following seminal RL works on fine-tuning diffusion model \cite{fan2023dpok, black2024training} that train and validate on a single prompt, we also fine-tune SDXL on the prompt (“an orange colored sandwich”). As in the SD-1.5 setup, we use two reward models—ImageReward (alignment) and VILA (aesthetics). The corresponding training reward curves are shown in Figure~\ref{fig:sdxl_training_curves}.

After training, we evaluated the model across different preference weights. For DB-MPA, DB-MPA-LS, and two baselines RS and CoDe, we swept $w$ in increments of 0.1. For RGG, due to it huge inference time cost, we only evaluated five points: $w\in\{0, 0.2, 0.5, 0.7, 1.0\}$. For each point, $64$ random seeds are used. The resulting Pareto front is shown in Figure~\ref{fig:sdxl_pareto}. Table~\ref{tab:sdxl_numeric} reports the numerical results for the representative weights. The qualitative trends are consistent with our observations in SD 1.5: DB-MPA and DB-MPA-LS continue to provide controllable reward trade-offs in the SDXL setting, achieving larger performance improvements compared to other baselines. Training-based methods (DB-MPA, DB-MPA-LS, and RS) demonstrate superior performance over training-free approaches (RGG and CoDe). Notably, CoDe exhibits similar performance across different reward weightings, which may be attributed to SDXL's larger VAE being more sensitive to noise in the intermediate-step predicted images during CoDe's lookahead best-of-N sampling scheme.
% }

% --------------------------------------------------------------
% FIGURE: Training Curves (Two Subfigures)
% --------------------------------------------------------------
\begin{figure}[t!]
    \centering
    \begin{subfigure}[t]{0.48\linewidth}
        \centering
        \includegraphics[width=\linewidth]{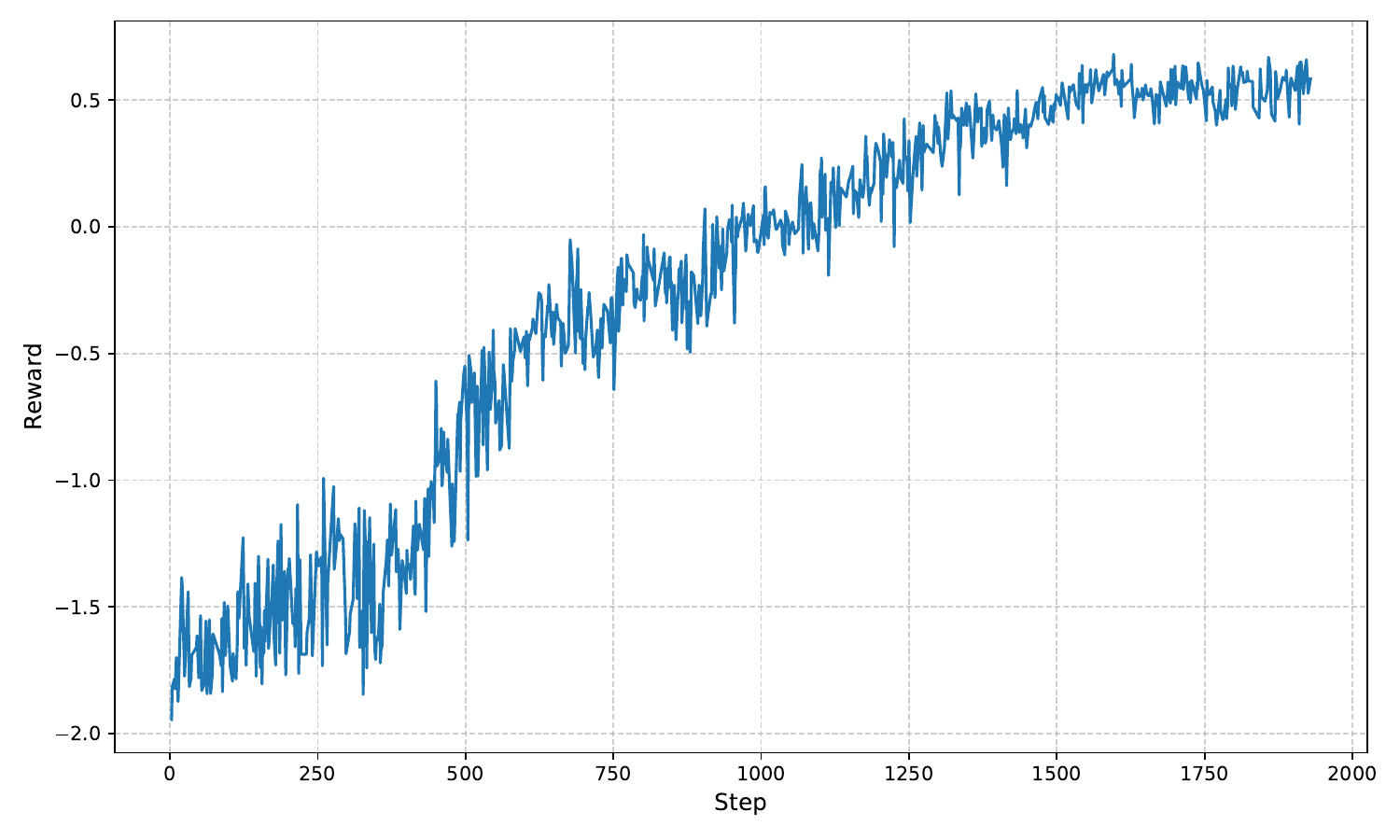}
        \caption{\small ImageReward training curve.}
    \end{subfigure}
    \hfill
    \begin{subfigure}[t]{0.48\linewidth}
        \centering
        \includegraphics[width=\linewidth]{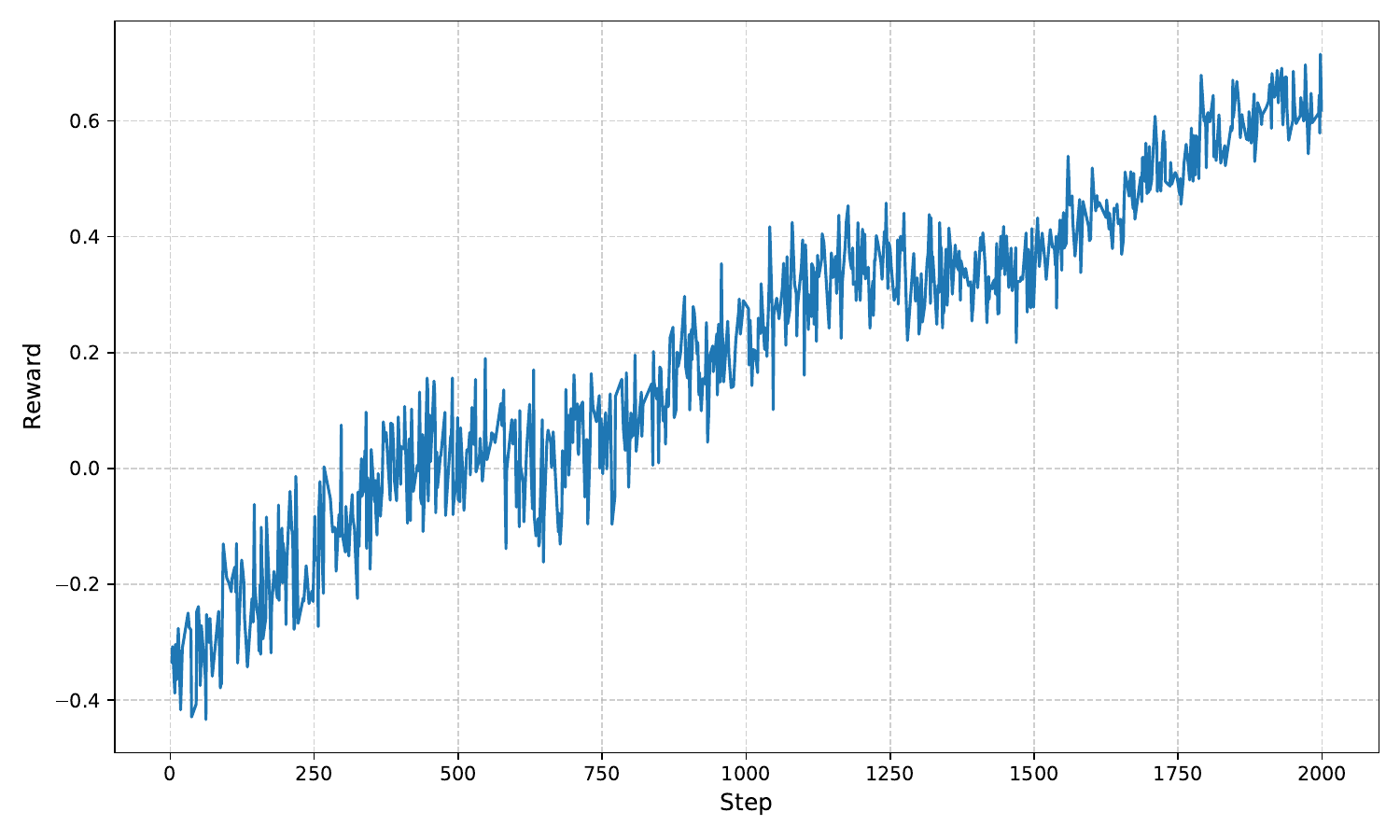}
        \caption{\small VILA training curve.}
    \end{subfigure}
    \caption{\small SDXL training curves for the two reward models. Training was run for roughly 2000 epochs over 72 GPU(A100) hours.}
    \label{fig:sdxl_training_curves}
\end{figure}

% --------------------------------------------------------------
% FIGURE: Pareto Front
% --------------------------------------------------------------
\begin{figure}[t!]
    \centering
    \includegraphics[width=0.80\linewidth]{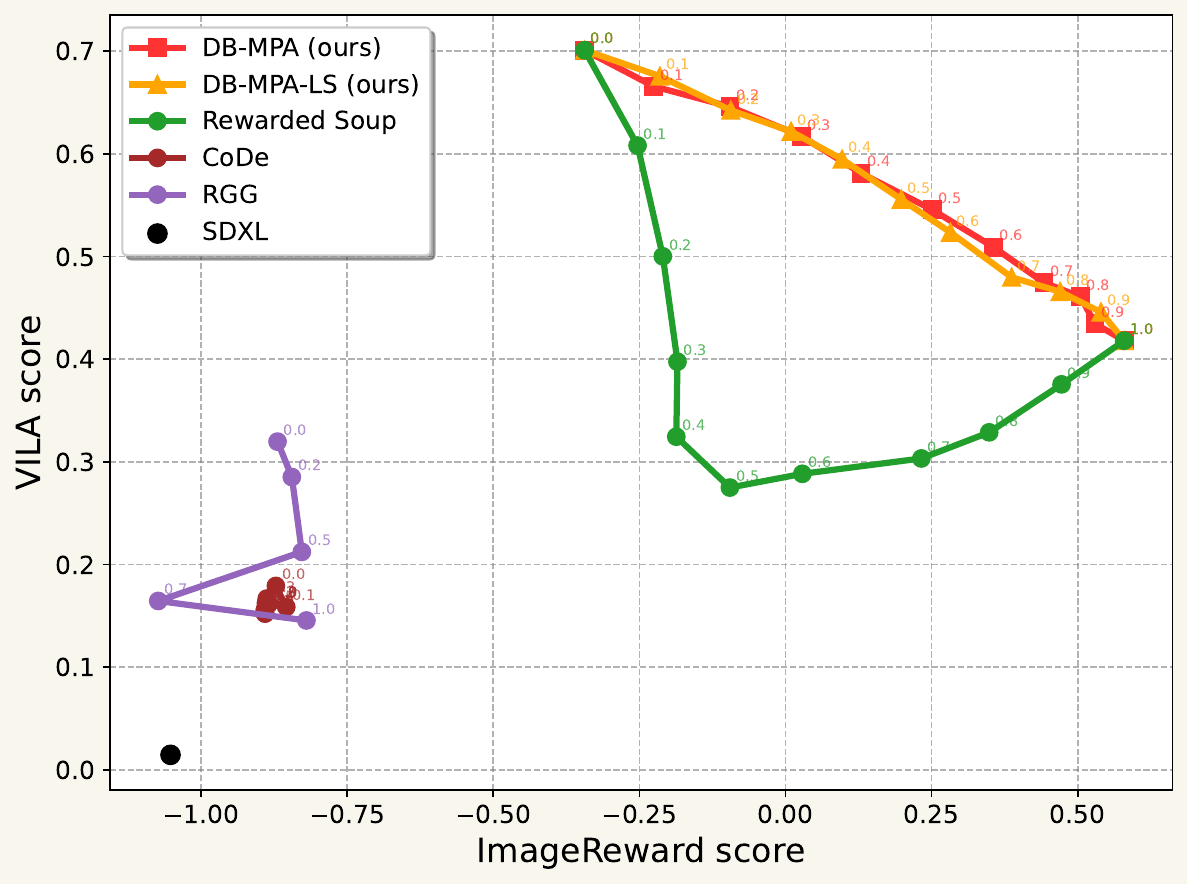}
    \caption{\small Pareto front of the SDXL LoRA model across preference weights.}
    \label{fig:sdxl_pareto}
\end{figure}

% --------------------------------------------------------------
% TABLE: Numerical Results
% --------------------------------------------------------------
\begin{table}[h!]
\centering
\caption{\small SDXL single prompt results: $r_1$ = ImageReward, $r_2$ = VILA. The pretrained SDXL has: $r_1 = -1.05$, $r_2=0.01$.}
\label{tab:sdxl_numeric}
{\small
% {\color{blue}
\begin{tabular}{ll
cc cc cc cc cc
}
\toprule
\multicolumn{2}{c}{} 
& \multicolumn{2}{c}{\textbf{DB-MPA}} 
& \multicolumn{2}{c}{\textbf{DB-MPA-LS}} 
& \multicolumn{2}{c}{\textbf{RS}}
& \multicolumn{2}{c}{\textbf{CoDe}} 
& \multicolumn{2}{c}{\textbf{RGG}} \\
\cmidrule(lr){3-4}
\cmidrule(lr){5-6}
\cmidrule(lr){7-8}
\cmidrule(lr){9-10}
\cmidrule(lr){11-12}
& & $r_1$ & $r_2$ & $r_1$ & $r_2$ & $r_1$ & $r_2$ & $r_1$ & $r_2$ & $r_1$ & $r_2$ \\
\midrule

\multirow{5}{*}{Reward ($\uparrow$)} 
& $w{=}0.0$ 
& $-0.34$ & $0.70$  
& $-0.34$ & $0.70$  
& $-0.34$ & $0.70$  
& $-0.87$ & $0.18$  
& $-0.87$ & $0.32$ \\

& $w{=}0.2$ 
& $-0.10$ & $0.65$  
& $-0.09$ & $0.64$  
& $-0.21$ & $0.50$  
& $-0.89$ & $0.16$  
& $-0.84$ & $0.29$ \\

& $w{=}0.5$ 
& $0.25$ & $0.55$  
& $0.20$ & $0.56$  
& $-0.10$ & $0.27$  
& $-0.89$ & $0.16$  
& $-0.83$ & $0.21$ \\

& $w{=}0.7$ 
& $0.44$ & $0.47$  
& $0.39$ & $0.48$  
& $0.23$ & $0.30$  
& $-0.89$ & $0.15$  
& $-1.07$ & $0.16$ \\

& $w{=}1.0$ 
& $0.58$ & $0.42$  
& $0.58$ & $0.42$  
& $0.58$ & $0.42$  
& $-0.88$ & $0.16$  
& $-0.82$ & $0.15$ \\

\bottomrule
\end{tabular}
}
% }
\end{table}

\newpage

\subsection{Visual Similarity of DB-MPA and DB-MPA-LS}
We present visual comparisons among DB-MPA, DB-MPA-LS, and RS under the two-reward setting using ImageReward (text-image alignment) and VILA (aesthetic quality). The results indicate that, for interpolating the same pair of diffusion reverse processes, DB-MPA and DB-MPA-LS yield visually similar outputs, both surpassing the baseline RS in image quality.
\begin{figure}[htbp]
    \centering
    \includegraphics[width=\textwidth]{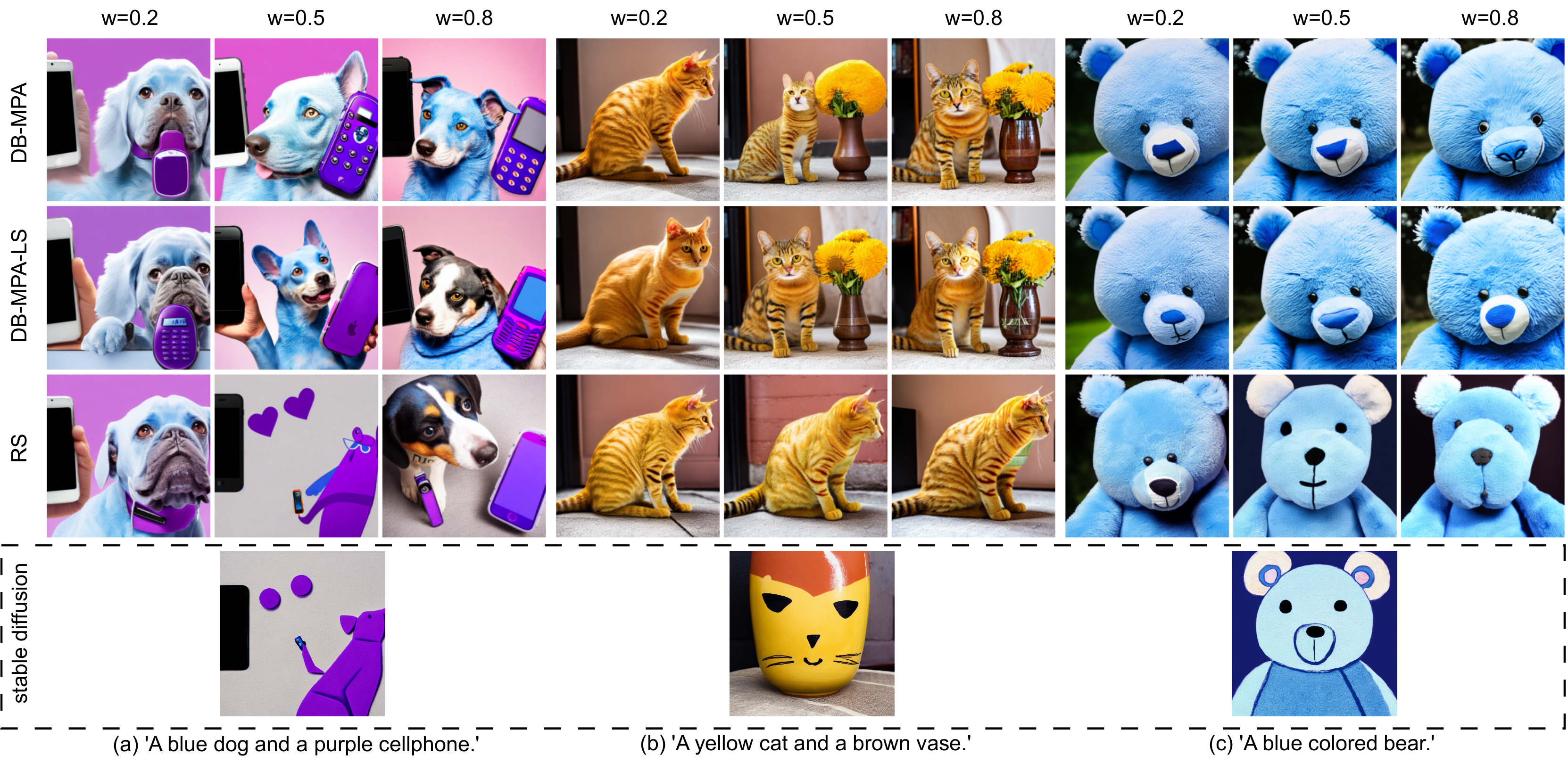}
    \caption{\small $w$ denotes the weight assigned to ImageReward, with $1-w$ corresponding to the weight for VILA. Both DB-MPA and DB-MPA-LS produce visually similar results, and each generates images better aligned to the user's preference than the baseline RS.}
    \label{fig:DB_LS}
\end{figure}

\newpage
\subsection{Visual Comparison with Baselines}\label{app:visual_MPA}

We provide additional qualitative comparisons between DB-MPA and the baselines in \cref{fig:db-mpa-additional-1}  using prompts from both the train and test sets, for $w \in \{0.2, 0.5, 0.8\}$.  Despite requiring no extra fine-tuning, DB-MPA generates images that are visually close to the MORL oracle baseline, and outperforms all other baseline methods.

\begin{figure}[H]
    \centering

    \begin{subfigure}{\textwidth}
        \centering
        \includegraphics[width=\textwidth]{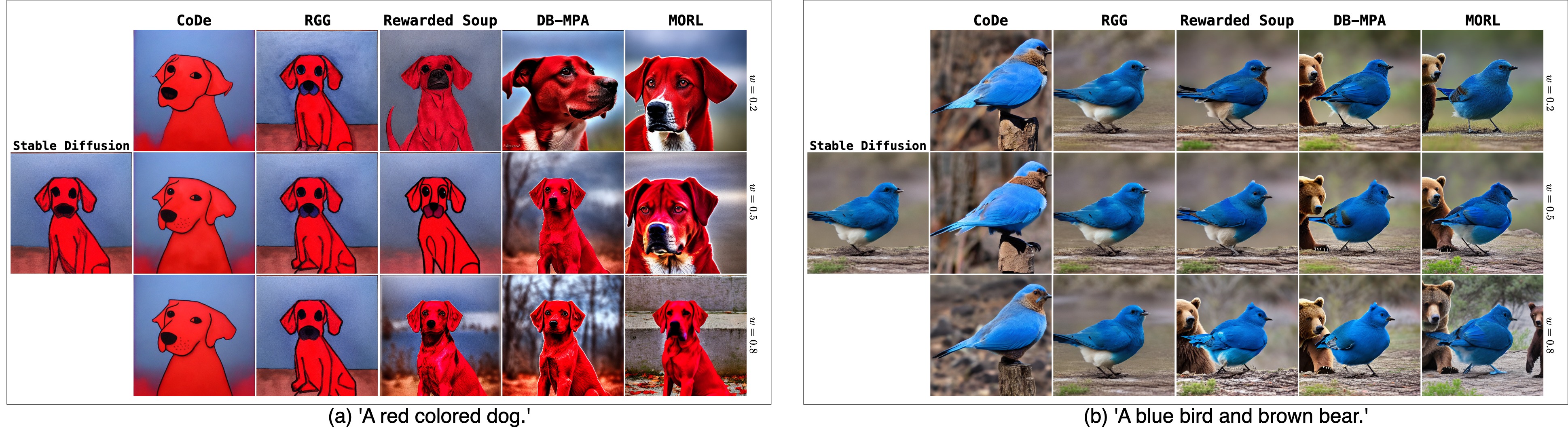}
    \end{subfigure}

    \begin{subfigure}{\textwidth}
        \centering
        \includegraphics[width=\textwidth]{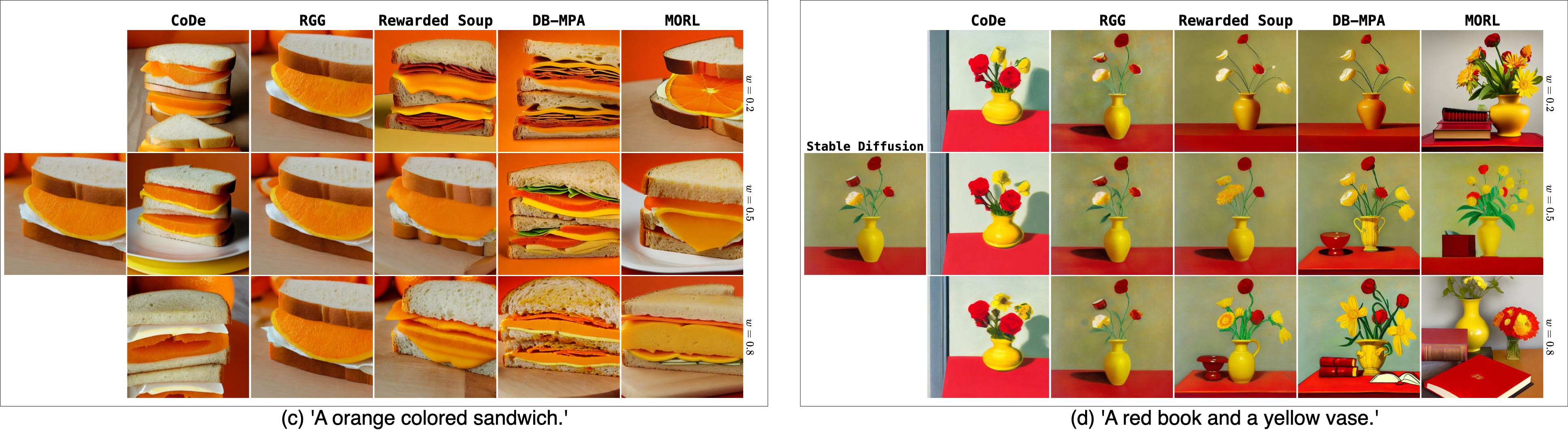}
    \end{subfigure}

    \begin{subfigure}{\textwidth}
        \centering
        \includegraphics[width=\textwidth]{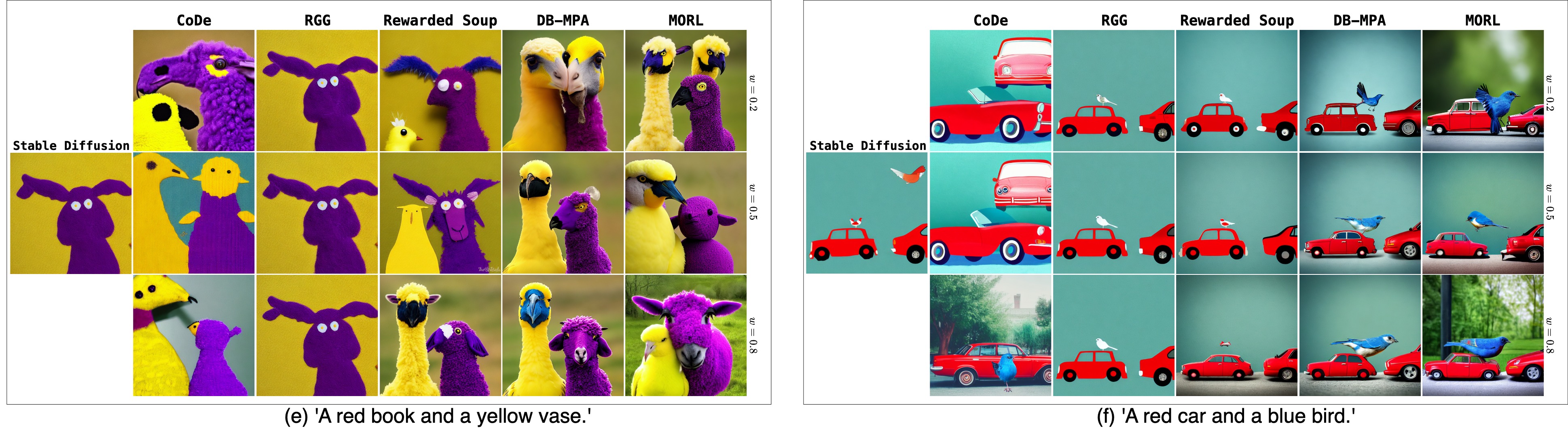}
    \end{subfigure}

     \begin{subfigure}{\textwidth}
        \centering
        \includegraphics[width=\textwidth]{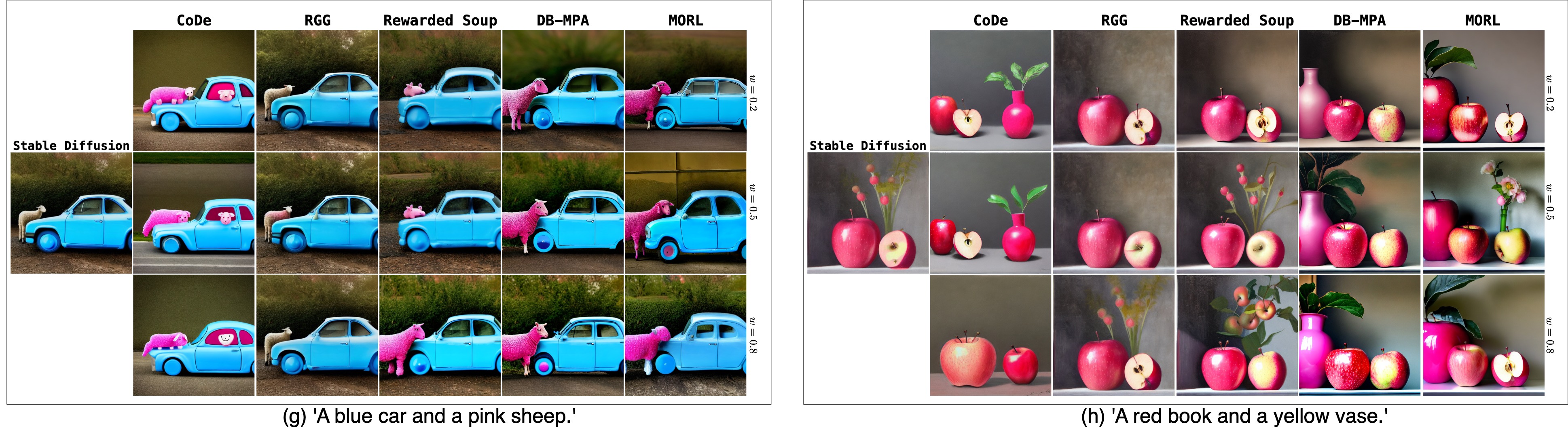}
    \end{subfigure}

    \caption{Qualitative comparison of DB-MPA with baselines. Subfigures (a)–(d) correspond to training prompts, and (e)–(h) to test prompts. In several cases, such as (a) and (e), Stable Diffusion produces cartoonish or unrealistic outputs. In contrast, DB-MPA generates more realistic and semantically aligned images by effectively leveraging multi-reward alignment, without requiring any additional fine-tuning.}
    \label{fig:db-mpa-additional-1}
\end{figure}

\newpage

\subsection{Multi-Preference Alignment with Finer Granularity} \label{smooth}
In Fig. \ref{fig:mix1}, we present additional results of DB-MPA and RS for multi-preference alignment with finer granularity of $w$, with $w \in \{0.1, 0.2, \ldots, 0.9\}$. As observed, both algorithms exhibit a smooth transition from aesthetically pleasing results to outputs that are more aligned with the input prompt. However, DB-MPA typically achieves better alignment with the input prompt, especially for  $w\in[0.3,0.7]$.

\begin{figure}[htbp]
    \centering
    \includegraphics[width=\textwidth]{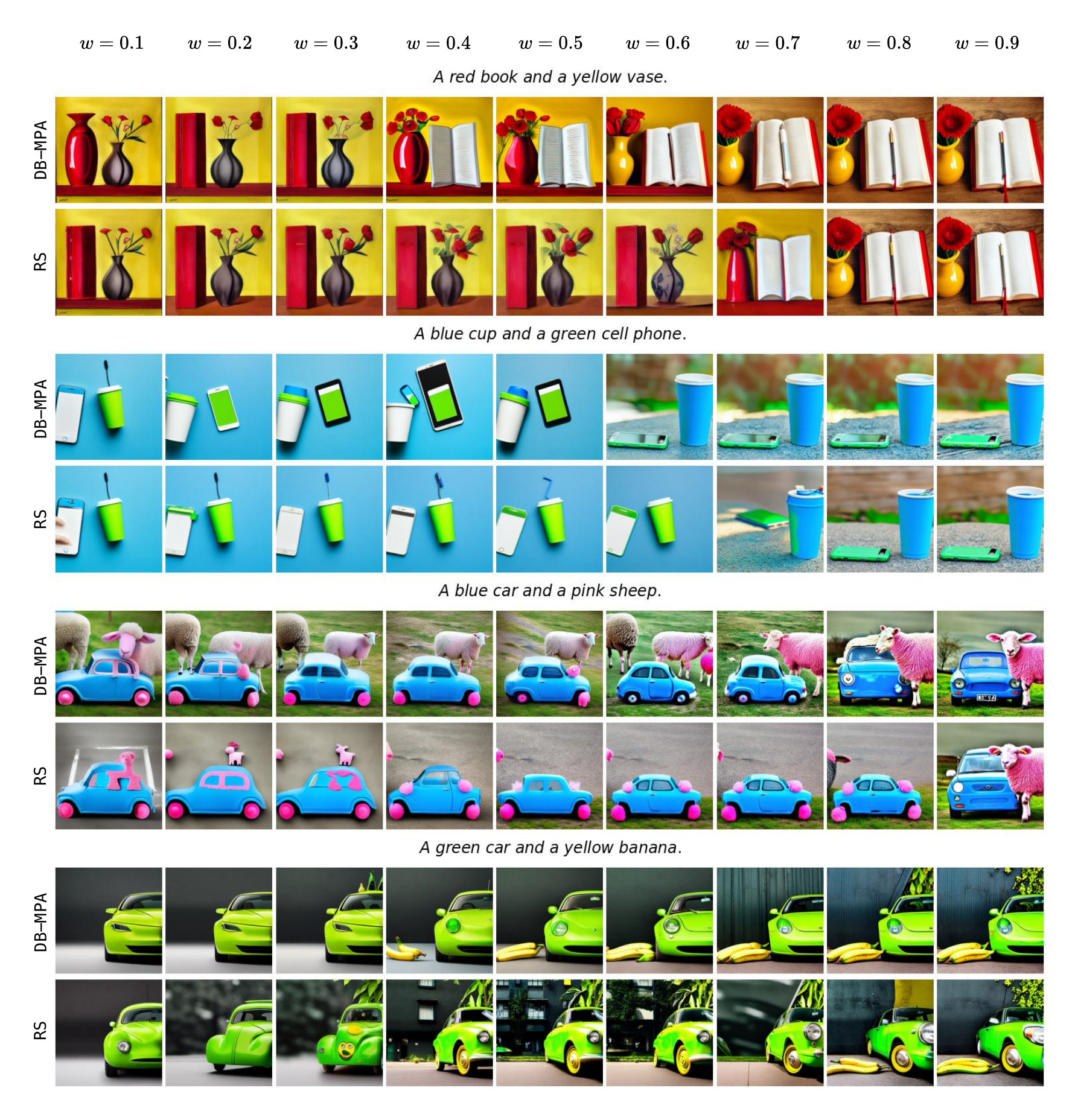}
    \caption{Qualitative comparison between DB-MPA (first row) and RS (second row) with  ImageReward and aesthetic score as rewards. Each block shows generations as the ImageReward weight $w$ increases from $0.1$ to $0.9$ (left to right). The first two examples are from the training prompt set, and the last two from the test prompt set. DB-MPA demonstrates smoother transitions and more precise alignment with the target reward preferences compared to RS, supporting the trends observed in the quantitative results.}
    \label{fig:mix1}
\end{figure}

\newpage

\section{DB-KLA: Additional Results}
\label{app:db-kla-results}

This section presents additional results for DB-KLA, including qualitative comparisons with the MORL oracle across diverse prompts. It also evaluates DB-KLA’s controllability under fine-grained variations of the KL weight.

\vspace{-0.3cm}
\subsection{Qualitative Comparison with Baselines}

We provide additional qualitative comparisons between DB-KLA and the MORL oracle baseline
in \cref{fig:kla_d_1}, using prompts from both the train and test sets. These examples show how DB-KLA adapts generation quality as the KL regularization strength changes, producing outputs that closely resemble those of the oracle baseline. 

\vspace{-0.37cm}
\begin{figure}[htbp]
    \centering

    \begin{subfigure}{\textwidth}
        \centering
        \includegraphics[height=0.18\textheight]{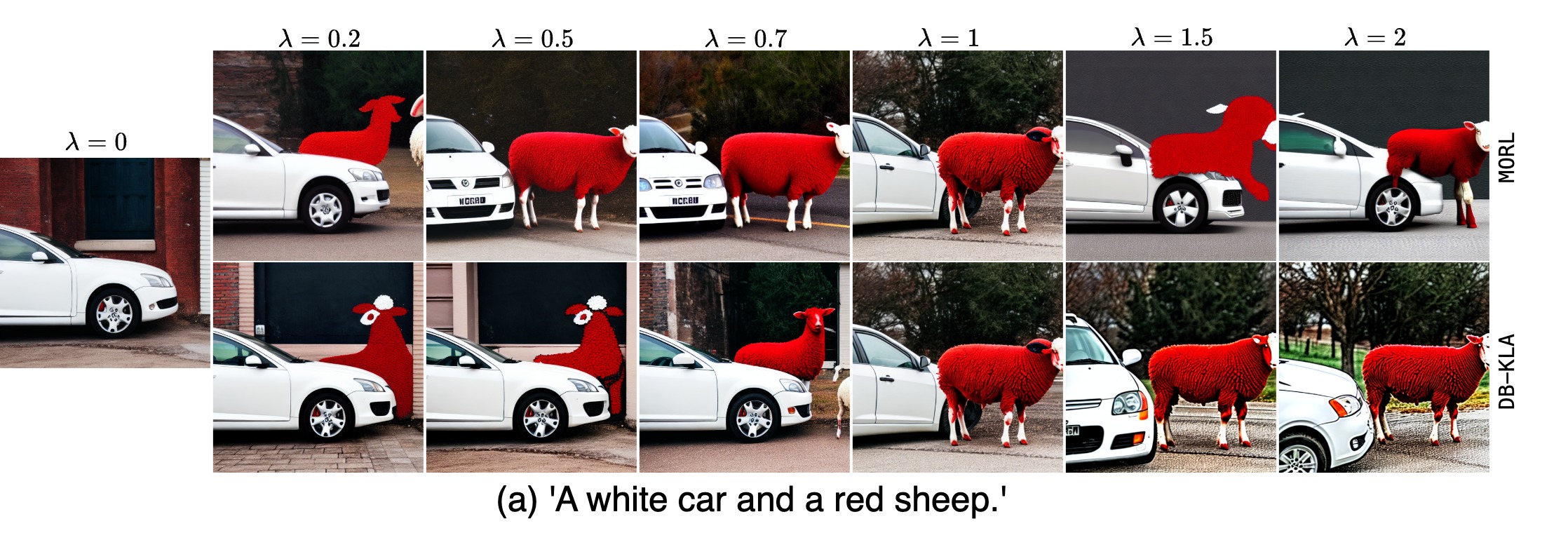}
        \label{fig:subfig1}
    \end{subfigure}

    \begin{subfigure}{\textwidth}
        \centering
        \includegraphics[height=0.18\textheight]{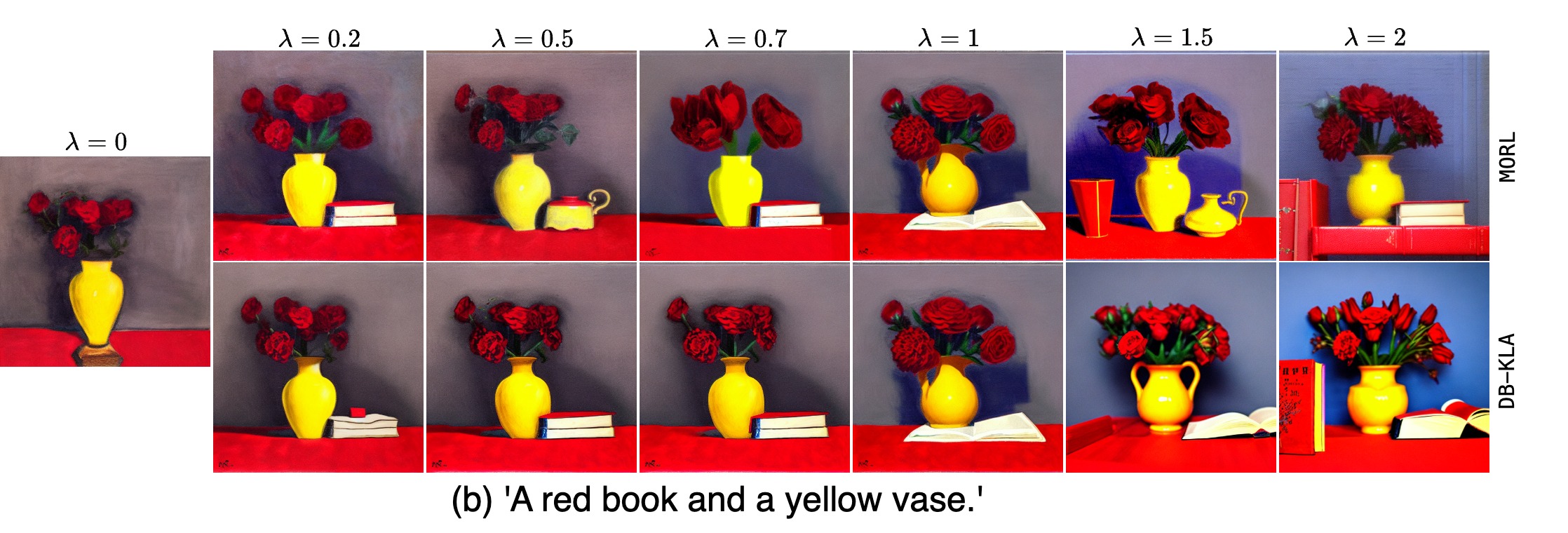}
        \label{fig:subfig2}
    \end{subfigure}

    \begin{subfigure}{\textwidth}
        \centering
        \includegraphics[height=0.18\textheight]{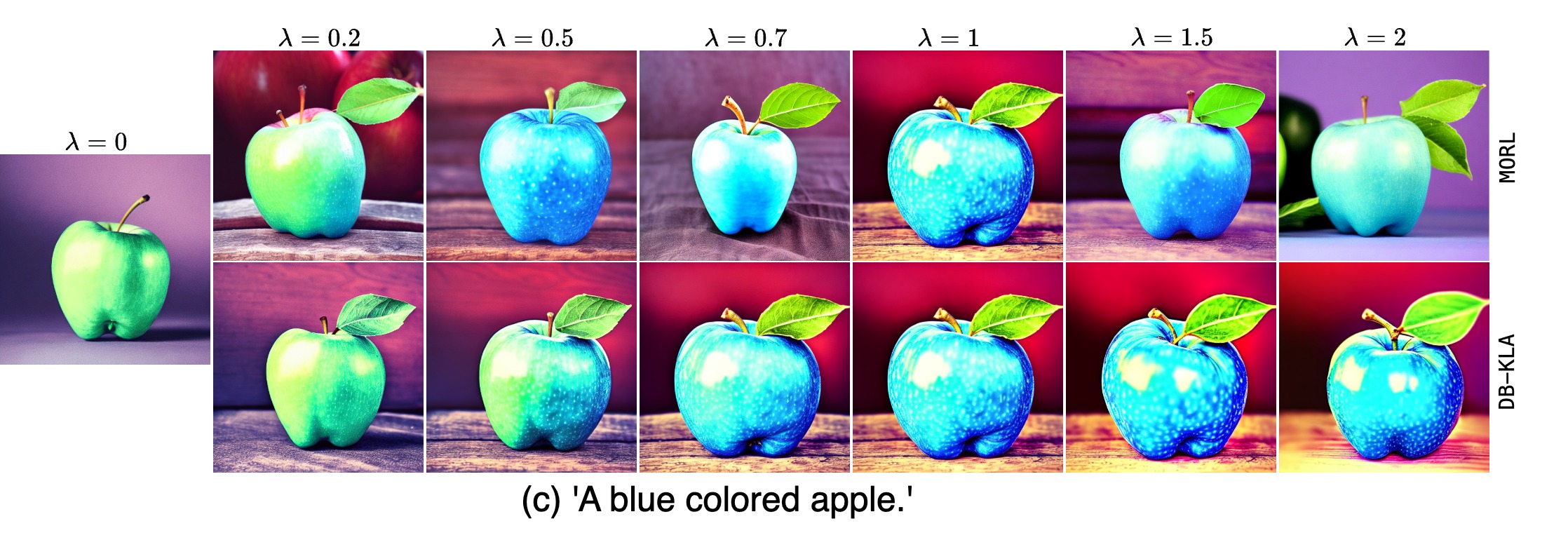}
        \label{fig:subfig3}
    \end{subfigure}

    \begin{subfigure}{\textwidth}
        \centering
        \includegraphics[height=0.18\textheight]{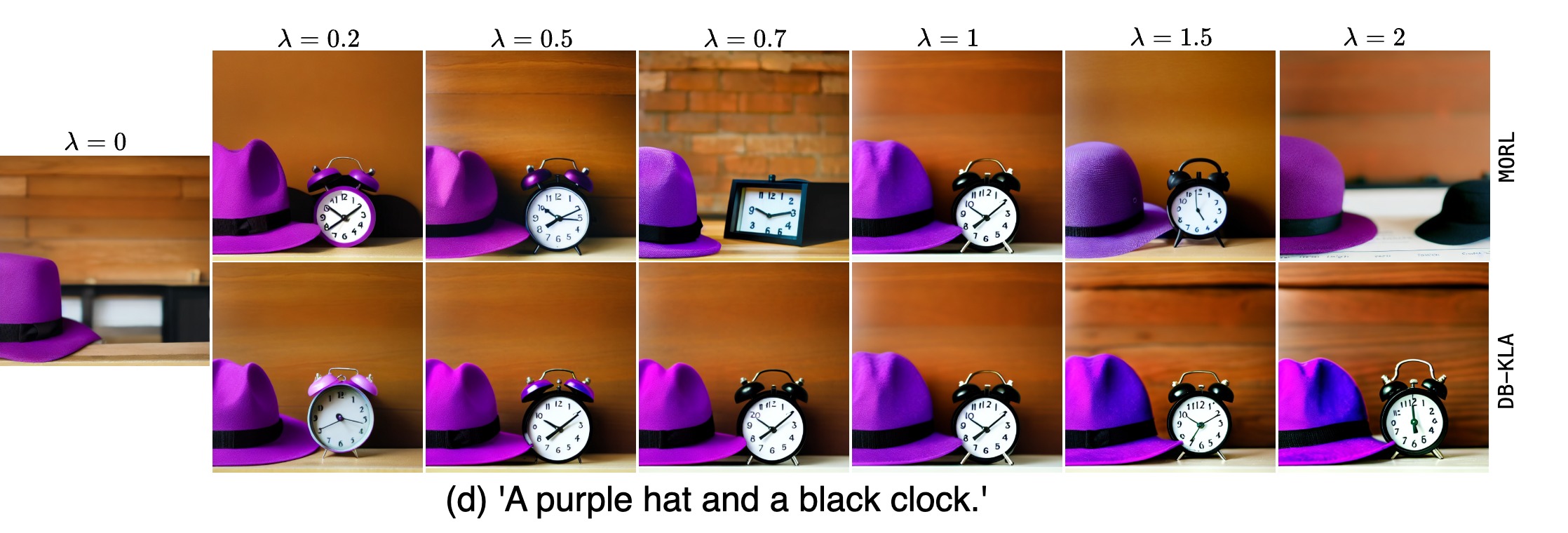}
        \label{fig:subfig4}
    \end{subfigure}

    \caption{\small Qualitative comparison between DB-KLA and MORL for  KL weights $\lambda \in \{0.2, 0.5, 0.7, 1.0, 1.5, 2.0\}$.  The first two rows show the results with train prompts, and the last two show the results with test prompts. DB-KLA generates images of similar quality to those of the MORL oracle baseline without any additional fine-tuning.}
    \label{fig:kla_d_1}
\end{figure}

\newpage
\subsection{KL Alignment with Finer Granularity}

In Fig. \ref{fig:kla_d_2}, we present additional results of DB-KLA  with finer granularity of $\lambda$, for $\lambda = [0, 0.2, 0.5, 0.7, 1.0, 1.5, 2.0]$. As regularization increases, the model shifts more toward optimizing the text-to-image reward, producing images that better match the prompt but drift further from the original Stable Diffusion output.

 \vspace{-3mm}
\begin{figure}[htbp]
    \centering
    \includegraphics[width=\textwidth]{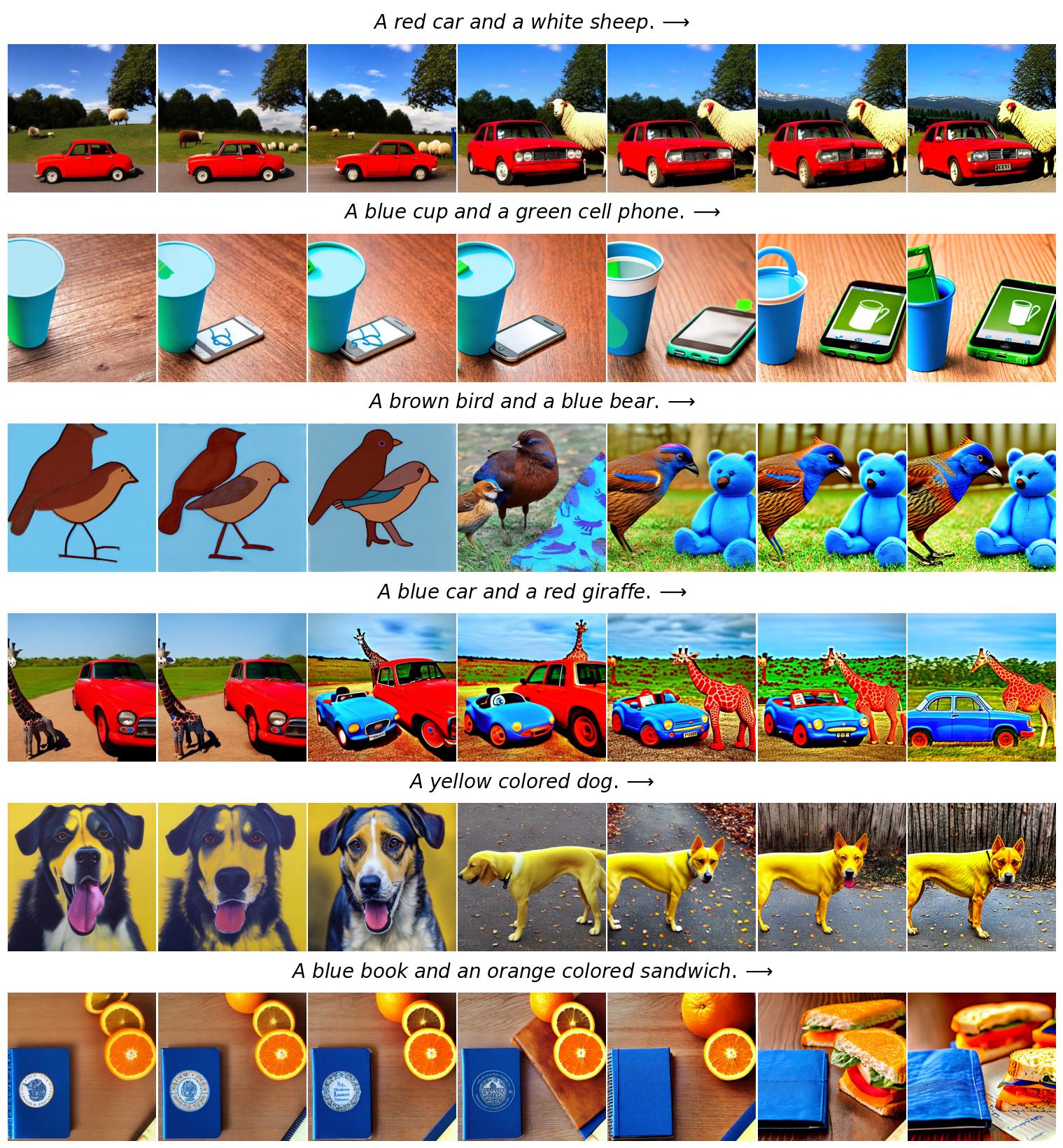}
    \caption{KL weight alignment in DB for $\lambda = [0, 0.2, 0.5, 0.7, 1.0, 1.5, 2.0]$, with $\lambda$ increasing from left to right.}
    \label{fig:kla_d_2}
\end{figure}

\section{Impact Statement}

Diffusion Blend enables flexible, inference-time alignment of diffusion models with user-specified preferences over multiple reward objectives and regularization strengths, without additional fine-tuning. This significantly reduces computational costs and increases adaptability for personalization. By leveraging a small set of fine-tuned models, it opens the door to scalable, user-controllable generative AI and sets the stage for more preference-aware deployment. While our work has broad implications for AI alignment and deployment as it enhances the existing diffusion models' performance, we do not foresee any immediate societal concerns that require specific highlighting. 

\section{The use of Large Language Models}
Portions of this work were prepared with the assistance of a large language model (ChatGPT, GPT-5, by OpenAI). The model was used as a writing assistant to improve clarity, grammar, and organization of the manuscript, and to suggest alternative phrasings of technical content written by the authors. All ideas, experiments, analyses, and final decisions regarding the content remain the responsibility of the authors. The test prompt set for the Drawbench prompt dataset is generated by LLM, following the standard benchmark like GenEval. The model was not used to generate research ideas, perform experiments, or create unverifiable scientific claims.

\end{document}